\let\truehypersetup\hypersetup
\renewcommand\hypersetup[1]{}
\let\hypersetup\truehypersetup
\patchcmd\maketitle{\def\@makefnmark{\rlap{\@textsuperscript{\normalfont\@thefnmark}}}}{}{}{}
\def\thanksAAffil#1{
  \footnotemarkAAffil\protected@xdef\@thanks{\@thanks%
        \protect\footnotetextAAffil[\the \c@footnoteAAffil]{#1}}%
}
\def\thanksANote#1{%
  \footnotemarkANote%
  \protected@xdef\@thanks{\@thanks%
        \protect\footnotetextANote[\the \c@footnoteANote]{#1}}%
}
\DeclareMathOperator*{\argmax}{arg\,max}
\newcommand\BibTeX{{\rmfamily B\kern-.05em \textsc{i\kern-.025em b}\kern-.08em
T\kern-.1667em\lower.7ex\hbox{E}\kern-.125emX}}
\definecolor{start_col}{HTML}{ff7f00}
\definecolor{end_col}{HTML}{fff4ea}
\definecolor{mid_col}{HTML}{ffb977}
\newcommand\Mark[2][8.4]{%
  \rlap{\tikz[]{
        \shade[left color=start_col, right color=end_col, middle color=mid_col]
               (0,0) rectangle ++(#1*#2/100,0.3);}%
  }%
}
\begin{document}

\title{Interpretable Reinforcement Learning for Robotics and Continuous Control}

\author{\name Rohan Paleja%
        \thanksANote{Equal contribution}$\ ^{,}$\thanksAAffil{Georgia Institute of Technology,
       Atlanta, GA 30332, USA}$\ ^{,}$\thanksANote{Corresponding author} \email rpaleja3@gatech.edu
       \AND
       \name Letian Chen\footnotemarkANote[1]$\ ^{,}$\footnotemarkAAffil[1] \email letian.chen@gatech.edu
       \AND
       \name Yaru Niu%
       \footnotemarkANote[1]$\ ^{,}$\thanksAAffil{Carnegie Mellon University, Pittsburgh, PA 15213, USA} \email yarun@andrew.cmu.edu
       \AND
       \name Andrew Silva\footnotemarkAAffil[1] \email asilva9@gatech.edu
       \AND
       \name Zhaoxin Li\footnotemarkAAffil[1] \email zli3088@gatech.edu
       \AND
        \name Songan Zhang\thanksAAffil{Ford Motor Company, Dearborn, MI 48120, USA} \email szhan117@ford.com
              \AND
       \name Chace Ritchie\footnotemarkAAffil[1] \email chaceritchiedev@gmail.com
       \AND
       \name Sugju Choi\footnotemarkAAffil[1] \email schoi24@gatech.edu
        \AND
       \name Kimberlee Chestnut Chang\thanksAAffil{MIT Lincoln Laboratory, Lexington, MA 02421, USA} \email chestnut@ll.mit.edu
        \AND
        \name Hongtei Eric Tseng\footnotemarkAAffil[3] \email hongtei.tseng@gmail.com
        \AND
        \name Yan Wang\footnotemarkAAffil[3] \email ywang21@ford.com 
        \AND
        \name Subramanya Nageshrao\footnotemarkAAffil[3] \email  snageshr@ford.com
       \AND
       \name Matthew Gombolay\footnotemarkAAffil[1] \email matthew.gombolay@cc.gatech.edu}
\editor{}

\maketitle

\vspace{-15mm} 

\begin{abstract}
Interpretability in machine learning is critical for the safe deployment of learned policies across legally-regulated and safety-critical domains. While gradient-based approaches in reinforcement learning have achieved tremendous success in learning policies for continuous control problems such as robotics and autonomous driving, the lack of interpretability is a fundamental barrier to adoption. We propose Interpretable Continuous Control Trees (ICCTs), a tree-based model that can be optimized via modern, gradient-based, reinforcement learning approaches to produce high-performing, interpretable policies.
The key to our approach is a procedure for allowing direct optimization in a sparse decision-tree-like representation.
We validate ICCTs against baselines across six domains, showing that ICCTs are capable of learning policies that parity or outperform baselines by up to 33$\%$ in autonomous driving scenarios while achieving a $300$x-$600$x reduction in the number of parameters against deep learning baselines. We prove that ICCTs can serve as universal function approximators and display analytically that ICCTs can be verified in linear time. Furthermore, we deploy ICCTs in two realistic driving domains, based on interstate Highway-94 and 280 in the US. Finally, we verify ICCT's utility with end-users and find that ICCTs are rated easier to simulate, quicker to validate, and more interpretable than neural networks.
\end{abstract}

\begin{keywords}
Differentiable Decision Tree, Interpretable Machine Learning, Autonomous Driving, Reinforcement Learning, Trustworthy Learning
\end{keywords}

\section{Introduction}
Reinforcement learning (RL) with deep function approximators has enabled the generation of high-performance continuous control policies across a variety of complex domains, from robotics \cite{Lillicrap2016ContinuousCW} and autonomous driving \cite{Wu2017FlowAA} to protein folding \cite{Jumper2021HighlyAP} and traffic regulation \cite{Cui2021ScalableMD}. These approaches hold tremendous promise in real-world applicability and have the potential
to increase traffic safety \citep{Katrakazas2015RealtimeMP}, decrease traffic congestion, increase average traffic speed in human-driven traffic \citep{Cui2021ScalableMD}, reduce $CO_2$ emissions, and allow for more affordable transportation \citep{Abe2019IntroducingAB}. 
However, while the performance of these controllers opens up the possibility of real-world adoption, the conventional deep-RL policies used in prior work \citep{Lillicrap2016ContinuousCW,Wu2017FlowAA,Cui2021ScalableMD} lack \emph{interpretability}, limiting deployability in safety-critical and legally-regulated domains \citep{doshi2017towards,letham2015interpretable,bhatt2019explainable,voigt2017eu}. 

White-box approaches, as opposed to typical black-box models (e.g., deep neural networks) used in deep-RL, model decision processes in a human-readable representation. Such approaches afford interpretability, allowing users to gain insight into the model's decision-making behavior. \textcolor{black}{In autonomous driving, such models would provide insurance companies, law enforcement, developers, and passengers with insight into how an autonomous vehicle (AV) reasons about state features and makes decisions.} Utilizing such white-box approaches within machine learning is necessary for the deployment of autonomous vehicles and essential in building trust, ensuring safety, and enabling developers to inspect and verify policies before deploying them to the real world \citep{olah2018building,hendricks2018generating,anne2018grounding}. In this work, we present a novel tree-based architecture that affords gradient-based optimization with modern RL techniques to produce high-performance, interpretable policies \textcolor{black}{for continuous control applications. We note that our proposed architecture can be applied to a multitude} of continuous control problems ranging from robotics \citep{Lillicrap2016ContinuousCW}, protein folding \citep{Jumper2021HighlyAP}, and traffic regulation \citep{Cui2021ScalableMD} to high-speed autonomous driving~\citep{Wu2017FlowAA} and autopilots for landing spacecraft~\citep{banerjee2018nonlinear}. 

Prior work~\citep{olah2018building,Kim2015InteractiveAI,hendricks2018generating} has attempted to approximate interpretability via explainability, a practice that can have severe consequences \citep{Rudin2018StopEB}. 
While the explanations produced in prior work can help to partially explain the behavior of a control policy, the explanations are not guaranteed to be accurate or generally applicable across the state-space, leading to erroneous conclusions and a lack of accountability of predictive models \citep{Rudin2018StopEB}. \textcolor{black}{In autonomous driving, where understanding a decision-model is critical to avoiding collisions, local explanations are insufficient.} An \textit{interpretable model}, instead, provides a transparent \textit{global} representation of a policy's behavior. This model can be understood directly by its structure and parameters \citep{Ciravegna2021LogicEN} (e.g., linear models, decision trees, and our ICCTs), \textcolor{black}{offering verifiability} and guarantees that are not afforded by post-hoc explainability frameworks. Few works have attempted to learn an interpretable model directly; rather, prior work has attempted policy distillation to a decision tree \citep{Frosst2017DistillingAN,viper,wu2018beyond} or imitation learning via a decision tree across trajectories generated via a deep model \citep{Bastani2018VerifiableRL}, leaving much to be desired. Interpretable RL remains an open challenge \citep{Rudin2021InterpretableML}. In this work, we directly produce high-performance, interpretable policies represented by a minimalistic tree-based architecture augmented with low-fidelity linear controllers via RL, \textcolor{black}{providing a novel interpretable RL architecture}. Our Interpretable Continuous Control Trees are human-readable, \textcolor{black}{allow for closed-form verification (associated with safety guarantees)}, and parity or outperform baselines by up to $33\%$ in autonomous driving scenarios. In this work, our key contributions are:

\begin{enumerate}[leftmargin=*]
    \item We propose Interpretable Continuous Control Trees (ICCTs), a novel tree-based architecture that can be optimized via gradient descent with modern RL algorithms to produce high-performance, interpretable continuous control policies. We provide several extensions to prior differentiable decision tree (DDT) frameworks to increase expressivity and allow for direct optimization of a sparse decision-tree-like representation.
    \item We show that our ICCTs are universal function approximators and can thus be utilized to learn continuous control policies in any domain, assuming that the ICCT has a reasonable depth.
    \item We empirically validate ICCTs across six continuous control domains, including four autonomous driving scenarios. Further, we demonstrate ICCT's ability to learn driving policies in complex domains grounded in realistic real-world lane geometries, including the I-94 highway in Michigan, USA, and the I-280 highway in California, USA. 
    \item We show that our ICCTs can be verified in linear time, a vital characteristic in assessing and understanding a model's behavior under a set of inputs. Whereas black-box approaches are difficult to verify, our ICCTs can be verified quickly, providing the possibility of safety guarantees and opening the door for safe, real-world adoption.
    \item We demonstrate the utility of our ICCTs with end-users through a human-subjects study (N=34) and show that the ICCT is rated by users as easier to simulate, quicker to validate, and more interpretable than neural networks. 
\end{enumerate}

This paper presents our work in the field of Interpretable Reinforcement Learning and Explainable AI, highlighting prior techniques in Section 2. In Section 3, we introduce the necessary preliminary work on Differentiable Decision Trees and Reinforcement Learning. Our Methodology, covered in Section 4, outlines the ICCT architecture and the Differentiable Crispification technique used for enabling policy updates via gradient-based techniques. Section 5 establishes ICCTs as universal function approximators, and Section 6 analyzes the time complexity for model verification. In Sections 7 and 8, we introduce and evaluate our ICCT across six continuous control domains. In Section 9, we offer a qualitative example of a learned policy within the Lunar Lander domain to showcase the interpretability of our model. Section 10 explores the tradeoff between performance, leaf controller sparsity, and tree depth. We compare our differential crispification method to the Gumbel-Softmax procedure in Section 11. Section 12 demonstrates the interpretability and utility of ICCTs through a 14-car physical robot demonstration. In Section 13, we introduce two realistic driving domains based on real-world lane geometries and find that trained ICCTs can produce safe, high-performance behavior that follows traffic regulations. Finally, Section 14 presents a user study evaluating the interpretability of our model.

\section{Related Work}
\label{sec:related_work}
\textcolor{black}{Due to recent accidents with autonomous vehicles (cf.~\citet{Yurtsever2020ASO}), there has been growing interest in developing Explainable AI (xAI) approaches to understand a control policy's decision-making and ensure robust and safe operation.} xAI is concerned with understanding and interpreting the behavior of AI systems \citep{Linardatos2021ExplainableAA}. In recent years, the necessity for human-understandable models has increased greatly for safety-critical and legally-regulated domains, many of which involve continuous control (e.g., specifying joint torques for a robot arm or the steering angle for an autonomous vehicle) \citep{Kim2017InterpretableLF,doshi2017towards}. 
In such domains, prior work \citep{Schulman2017ProximalPO,Lillicrap2016ContinuousCW,Fujimoto2018AddressingFA,haarnoja2018soft} has typically used highly-parameterized deep neural networks in order to learn high-performance policies, completely lacking in model transparency.

Interpretable machine learning approaches refers to a subset of xAI techniques that produce globally transparent policies (i.e., humans can inspect the entire model, as in a decision tree \citep{Breiman1983ClassificationAR,basak2004online,olaru2003complete} or rule list \citep{angelino2017learning,weiss1995rule,letham2015interpretable,chen2017optimization}). 
Decision trees \citep{Breiman1983ClassificationAR} represent a hierarchical structure where an input decision can be traced to an output via evaluation of decision nodes (i.e., ``test" on an attribute) until arrival at a leaf node. 
Decision nodes within the tree are able to split the problem space into meaningful subspaces, simplifying the problem as the tree gets deeper \citep{laptev2014convolutional,kontschieder2015deep,tanno2018adaptive}.
Decision trees provide \textit{global} explanations of a decision-making policy that are valid throughout the input space \citep{Barbiero2021PyTorchEA}, as opposed to local explanations typically provided via ``post-hoc"  explainability techniques \citep{Ribeiro2019ModelAgnosticEA,Silva2021EncodingHD,Paleja2020InterpretableAP}. 
Several approaches have attempted to distill trained black-box neural network models into decision trees \citep{Wu2020RegionalTR,viper}. \textcolor{black}{While these approaches produce interpretable models, the resulting model is an approximation of the neural network rather than a true representation of the underlying model. Our work, instead, directly learns an interpretable tree-based policy via reinforcement learning, producing a model that can be directly verified without utilizing error-prone post-hoc explainability techniques.
We emphasize that explainability stands in contrast to interpretability}, as explanations may fail to capture the true decision-making process of a model or may apply only to a local region of the decision-space, thereby preventing a human from building a clear or accurate mental model of the entire policy ~\citep{Rudin2018StopEB,lipton2018mythos,adadi2018peeking, Paleja_Utility_of_xAI}. 

Recently, \citet{Rudin2021InterpretableML} presented a set of grand challenges in interpretable machine learning to guide the field toward solving critical research problems that must be solved before machine learning can be safely deployed within the real world. 
In this work, we present a solution to directly assess two challenges: (1) Optimizing sparse logical models such as decision trees and (10) Interpretable reinforcement learning. We propose a novel high-performing, sparse tree-based architecture, Interpretable Continuous Control Trees (ICCTs), which allows end-users to directly inspect the decision-making policy and developers to verify the policy for safety guarantees. 




\section{Preliminaries}
In this section, we review differentiable decision trees (DDTs)
and reinforcement learning. 
\begin{figure*}[t]
    \centering
    \includegraphics[width=\textwidth]{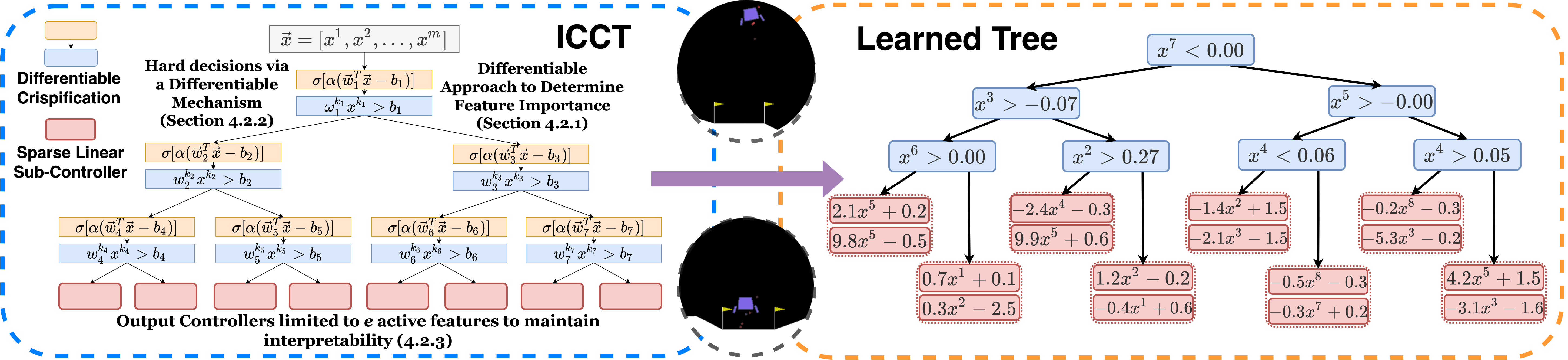}
    \caption{\textcolor{black}{The ICCT framework (left) displays decision nodes, both in their fuzzy form (orange blocks) and crisp form (blue blocks\protect\footnotemark), and sparse linear leaf controllers with pointers to sections discussing our contributions. A learned representation of a high-performing ICCT policy in Lunar Lander (right) displays the interpretability of our ICCTs. Each decision node is conditioned upon only a single feature and the sparse linear controllers (to control the main engine throttle and left/right thrusters) are set to have only \textbf{one} active feature.}}
    
    \label{fig:dt}
\end{figure*}

\subsection{Differentiable Decision Trees (DDTs)}
\label{sec:ddts}
\textcolor{black}{Prior work has proposed differentiable decision trees (DDTs)  \citep{suarez1999globally,Silva2021EncodingHD,Paleja2020InterpretableAP, tambwekar_ral} -- a neural network architecture that takes the topology of a decision tree (DT). Similar to a decision tree, DDTs contain decision nodes and leaf nodes; however, each decision node within the DDT utilizes a sigmoid activation function (i.e., a ``soft" decision) instead of a Boolean decision (i.e., a ``hard" decision). Each decision node, $i$, is represented by a sigmoid function, displayed in Equation \ref{eq:ddt_decision_node}.}
\begin{equation}
\label{eq:ddt_decision_node}
y_i = \frac{1}{1+\exp(-\alpha(\vec{w}_{i}^{T} \vec{x} - b_i))}
\end{equation}
\textcolor{black}{Here, the features vector describing the current state, $\vec{x}$, are weighted by $\vec{w}_i$, and a splitting criterion, $b_i$, is subtracted to form the splitting rule. $y_i$ is the probability of decision node $i$ evaluating to \textsc{True}, and $\alpha$ governs the steepness of the sigmoid activation, where $\alpha \rightarrow \infty$ results in a step function.} Prior work with discrete-action DDTs modeled each leaf node with a probability distribution over possible output classes \citep{Silva2021EncodingHD, Paleja2020InterpretableAP}. Leaf node distributions, $\vec{L}$, are then weighted by the probability of reaching the respective leaf and summed to produce a final action distribution over possible outputs.\footnotetext{For figure simplicity, when displaying the crisp node (blue block), we assume $\alpha>0$ in the fuzzy node (orange block). If $\alpha<0$, the sign of the inequality would be flipped (i.e., $w_i^{k_i} x^{k_i}<b$).} \textcolor{black}{For a simple 4-leaf tree, this results in an \textit{fuzzy} output distribution with a complex interplay of node and leaf probabilities, displayed in Equation \ref{eq:fuzzy_output}.}
\begin{equation}
\label{eq:fuzzy_output}
    P(a|x) = \vec{L}_1 (y_1 * y_2) + \vec{L}_2 (y_1 * (1-y_1)) + \vec{L}_3 ((1-y_1)*y_3) + \vec{L}_4 ((1-y_1)(1-y_3))
\end{equation}

\subsubsection{Conversion of a DDT to a DT}
\label{sec:prolo_crisp}
DDTs with decision nodes represented in the form of Equation \ref{eq:ddt_decision_node} are not interpretable.
As DDTs maintain a one-to-one correspondence to DTs with respect to their structure, prior work \citep{Silva2021EncodingHD, Paleja2020InterpretableAP} proposed a methodology to convert a DDT into an interpretable decision tree (a process termed ``crispification").
To create an interpretable, ``crisp" tree from a differentiable form of the tree, prior work adopted a simplistic procedure. Starting with the differentiable form, prior work first converts each decision node from a linear combination of all variables into a single feature check (i.e., a 2-arity predicate with a variable and a threshold). The feature reduction is accomplished by considering the feature dimension corresponding to the weight with the largest magnitude \textcolor{black}{(i.e., most impactful)}, $ k = \argmax_{j}|w_{i}^{j}|$, where $j$ represents the feature dimension,
resulting in the decision node representation $y_i = \sigma(\alpha(w^k_ix^k-b_i))$.
The sigmoid steepness, $\alpha$, is also set to infinity, resulting in a ``hard" decision (branch left OR right) \citep{Silva2021EncodingHD,Paleja2020InterpretableAP}. After applying this procedure to each decision node, decision nodes are represented by  $y_i = \mathbbm{1}(w^k_ix^k-b_i > 0)$.
As each leaf node is represented as a probability mass function over output classes in prior work, each leaf node, $l_d$, indexed by $d$, must be modified to produce a single output class, $o_d$, during crispification. As such, we can apply an argument max, $o_d = \argmax_a l^a_d$, where $a$ denotes the action dimension, to find the maximum valued class within the $d$-th leaf distribution.

\noindent\textit{Deficiency of direct conversion from DDT to DT:} This simplistic crispification procedure results in an interpretable crisp tree that is inconsistent with the original DDT (model differences arise from each \textit{argmax} operation and setting the signmoid steepness, $\alpha$, to infinity). These inconsistencies can lead to performance degradation of the interpretable model, as we show in Section \ref{sec:results}, and results in an interpretable model that is not representative of and inconsistent with the model learned via reinforcement learning. 

\textcolor{black}{In our work, we address these limitations by designing a novel architecture that updates its parameters via gradient descent while maintaining an interpretable decision-tree-like representation, thereby avoiding any inconsistencies generated through a post-hoc crispification procedure.
To the best of our knowledge, we are the first work to deploy an interpretable tree-based framework for continuous control.}

\subsection{Reinforcement Learning (RL)}
A Markov Decision Process (MDP) $M$ is defined as a 6-tuple $\langle S,A,R,T,\gamma,\rho_0\rangle$. $S$ is the state-space, $A$ is the action-space, $R(s,a)$ is the reward received by an agent for executing action, $a$, in state, $s$, $T(s^\prime|s,a)$ is the probability of transitioning from state, $s$, to state, $s'$, when applying action, $a$, $\gamma\in [0,1]$ is the discount factor, and $\rho_0(s)$ is the initial state distribution. 
A policy, $\pi(a|s)$, gives the probability of an agent taking action, $a$, in state, $s$. 
The goal of RL is to find the optimal policy, 
$\pi^*=\arg\max_\pi \mathbb{E}_{\tau\sim\pi}\left[\sum_{t=0}^T{\gamma^tR(s_t,a_t)}\right]$ to maximize cumulative discounted reward, where $\tau=\langle s_0,a_0,\cdots,s_T,a_T\rangle$ is an agent's trajectory. 

In this work, while ICCTs are framework-agnotistic (i.e., ICCTs will work with any RL update rule), we proceed with Soft Actor-Critic (SAC) \citep{haarnoja2018soft} as our RL algorithm due to its learning stability and sample efficiency. The actor objective within SAC is given in Equation \ref{eqn:sac-actor}, where $Q_w(s_t,a_t)$ is expected, future discounted reward parameterized by $\omega$ and $\alpha_{\tau}$ is a temperature parameter that determines the relative importance of the stochastic policy entropy versus the reward.
\begin{align}
    \label{eqn:sac-actor}
        J_\pi(\theta) = \mathbb{E}_{s_t \sim\mathcal{D}}[\mathbb{E}_{a_t \sim\pi_\theta}[&\alpha_{\tau} \log(\pi_\theta(a_t|s_t)) -  Q_\omega(s_t, a_t)]]
\end{align}

\section{Method}
\label{sec:method}
In this section, we introduce our ICCTs, a novel interpretable reinforcement learning architecture. ICCTs are able to maintain interpretability while representing high-performance continuous control policies, making them suitable for applications that require trust and accountability, such as robotic manipulation and autonomous vehicle control. We provide several extensions to prior DDT frameworks within our proposed architecture, including 1) a differentiable crispification procedure allowing for optimization in a sparse decision-tree-like representation and 2) the addition of sparse linear leaf controllers to increase expressivity while maintaining legibility. 

\subsection{ICCT Architecture}
\label{sec:icct_arch}
Our ICCTs are initialized to be a symmetric decision tree with $N_l$ decision leaves and $N_l-1$ decision nodes. A depiction of our ICCT can be seen in Figure~\ref{fig:dt}, with decision leaves shown in red and decision nodes shown in blue. The tree depth can be determined by $\log_2(N_l)$. Each decision leaf is represented by a sparse linear controller that operates on $\vec{x}$. Decisions are routed via decision nodes toward a leaf controller, which is then used to produce the continuous control output (e.g., acceleration or steering wheel angle).
Our ICCT is similar to hierarchical models, which encompass a high-level controller that governs and coordinates multiple low-level controllers. Prior work has shown this to be a successful paradigm in continuous control \citep{Nachum2018DataEfficientHR}.

Each decision node, $i$, has an activation steepness weight, \textcolor{black}{$\alpha$}, associated weights, $\vec{w}_i$, with cardinality, $m$, matching that of the input feature vector, $\vec{x}$, and a scalar bias term, $b_i$, similar to that of Equation \ref{eq:ddt_decision_node}. 
\textcolor{black}{Each leaf node, $l_d$, where $d \in \{1,\dots,N_l\}$, contains per-leaf weights, $\vec{\beta}_d \in \mathbb{R}^{m}$, per-leaf selector weights that learn the relative importance of candidate features, $\vec{\theta}_d \in \mathbb{R}^{m}$}, per-leaf bias terms, $\vec{\phi}_{d} \in \mathbb{R}^{m}$,
and per-leaf scalar standard deviations, $\gamma_d$. 
We note that if the action space is multi-dimensional, then \emph{only} the leaf controllers (and associated weights) are expanded across $|A|$ dimensions, where $|A|$ is the cardinality of the action space. For each action dimension, the mean of the output action distribution is represented by the linear controller, $l_d$.
\begin{equation}
    \label{eq:complete_leaf}
    l_d \triangleq (\vec{u}\circ\vec{\beta}_d)^T (\vec{u}\circ\vec{x}) - \vec{u}^T\vec{\phi}_d
\end{equation}
Before enforcing leaf controller sparsity (i.e., by forcing the controller to condition upon only a subset of features, Section~\ref{sec:sparse_sub_controllers}), $\vec{u}=[1, ..., N_l]^T$ is an all-ones vector, representing the set of active features within the leaf node, in which case Equation~\ref{eq:complete_leaf} can be simplified as $l_d=\vec{\beta}_d^T \vec{x} - \vec{u}^T\vec{\phi}_d$. The output action can be determined via sampling ($a\sim \mathcal{N}(\vec{\beta}_d^T \vec{x} - \vec{u}^T\vec{\phi}_d, \gamma_d)$) during training and directly via the mean during runtime. We term decision nodes that are represented as Equation \ref{eq:ddt_decision_node} as fuzzy decision nodes, displayed by the orange rectangles within the left-hand side of Figure \ref{fig:dt}. Similarly, we term the leaf node, $l_d$, when it is represented in the dense representation of $\vec{\beta}_d^T \vec{x} - \vec{u}^T\vec{\phi}_d$, as a fuzzy leaf node. Here, we parameterize the bias term as a vector, $\vec{\phi}_d$, instead of a scalar as in our decision nodes to provide a corresponding bias for each feature and facilitate feature-wise optimization across different dimensions of the bias term.

Utilizing a novel differentiable crispification procedure to convert fuzzy decision nodes into crisp decision nodes (i.e., 2-arity predicate with a variable and a threshold) and fuzzy leaf nodes into sparse leaf nodes (i.e., linear controller conditioned upon a small subset of features), our model representation follows that of a decision tree with sparse linear controllers at the leafs (shown on the right-hand side of Figure \ref{fig:dt}). We further discuss our differentiable crispification procedure in Sections \ref{sec:decision_nodes_crispification}-\ref{sec:decision_outcome_crispification} (i.e., the mechanism that translates orange blocks to blue within Figure \ref{fig:dt}) and leaf controller sparsification procedure in Section \ref{sec:sparse_sub_controllers}. 

While decision trees (DT) are generally considered interpretable \citep{letham2015interpretable}, trees of arbitrarily large depths can be difficult to understand \citep{Ghose2020InterpretabilityWA} and simulate \citep{lipton2018mythos}. A sufficiently sparse DT is desirable and considered interpretable \citep{lakkaraju}. Furthermore, the utilization of linear controllers at the leaves also allows us to maintain interpretability, as linear controllers are widely used and generally considered interpretable for humans \citep{Hein2020InterpretableCB}. However, for large feature spaces typically encountered in real-world problems, such a controller would not be interpretable.  \textcolor{black}{As such, in our work, we utilize \emph{sparse} linear controllers at the leaves to balance the trade-off between sparsity/complexity in logic, model depth, and performance.} 


\subsection{\textcolor{black}{ICCT Key Elements}}
In this section, we discuss our ICCT's interpretable procedure for determining an action given an input feature. As our ICCT configuration maintains interpretability both during training via RL and deployment, the inference of an action \textcolor{black}{must allow gradient flow. We present a novel approach that allows for direct optimization of sparse logical models via an online differentiable crispification procedure to determine feature importance (Section \ref{sec:decision_nodes_crispification}) and allows for bifurcate decisions (Section \ref{sec:decision_outcome_crispification}).} In Algorithm \ref{alg:algorithm_training}, we provide general pseudocode representing our ICCT's decision-making process. 

At each timestep, the ICCT model, $I(\cdot)$, receives a state feature, $\vec{x}$. To determine an action in an interpretable form, in Steps 1 and 2 of Algorithm \ref{alg:algorithm_training}, we start by applying the differentiable crispification approaches of \textsc{Node$\_$Crisp} and \textsc{Outcome$\_$Crisp} to decision nodes so that each decision node is only conditioned upon a single variable (Section \ref{sec:decision_nodes_crispification}), and the evaluation of a decision node results in a Boolean (Section \ref{sec:decision_outcome_crispification}). Once the operations are completed, in Step 3, we can utilize the input feature, $\vec{x}$, and logically evaluate each decision node until arrival at a linear leaf controller (\textsc{Interpretable$\_$Node$\_$Routing}). The linear leaf controller is then modified, in Step 4, to only condition upon $e$ features, where $e$ is a sparsity parameter specified a priori (Section \ref{sec:sparse_sub_controllers}). Finally, an action can be determined via sampling from a Gaussian distribution conditioned upon the mean generated via the input-parameterized sparse leaf controller, $l^*_d$, and scalar variance maintained within the leaf, $\gamma_d$, during training (Step 6) or directly through the outputted mean (Step 8) during runtime.

\begin{algorithm}[ht]
\caption{ICCT Action Determination}
\label{alg:algorithm_training}
\textbf{Input}: \small ICCT $I(\cdot)$, state feature $\vec{x} \in S$, controller sparsity $e$, training flag $t \in \textsc{\{True, False\}}$ \\
\textbf{Output}: action $a \in \mathbb{R}$ 
\begin{algorithmic}[1] 
\STATE \textsc{Node$\_$Crisp}($\sigma(\alpha(\vec{w}^T_i \vec{x} - b_i))$)
$\rightarrow \sigma(\alpha(w_i^k x^k - b_i))$
\STATE \textsc{Outcome$\_$Crisp}($\sigma(\alpha(w_i^k x^k - b_i))$)
$\rightarrow \mathbbm{1}(\alpha(w_i^k x^k - b_i)>0)$
\STATE $l_d \leftarrow$ \textsc{Interpretable$\_$Node$\_$Routing}($\vec{x}$)
\STATE $l_d^* \leftarrow$ \textsc{Enforce$\_$Controller$\_$Sparsity}($e$, $l_d$)
\IF {$t$}
\STATE \textcolor{black}{$a\sim \mathcal{N}(l_d^*(\vec{x}), \gamma_d)$}
\ELSE
\STATE $a\leftarrow l_d^*(\vec{x})$
\ENDIF 
\end{algorithmic}
\end{algorithm}

\subsubsection{Decision Node Crispification}
\label{sec:decision_nodes_crispification}
The \textsc{Node$\_$Crisp} procedure in Algorithm \ref{alg:algorithm_training} \textcolor{black}{recasts} each decision node to split upon a single dimension of $\vec{x}$.
Instead of using a non-differentiable argument max function as in \citet{Silva2021EncodingHD} to determine the most impactful feature dimension, we utilize a softmax function, also known as softargmax \citep{Goodfellow-et-al-2016}, described by Equation \ref{eq:softmax}. In this equation, we denote the softmax function as $f(\cdot)$, which takes as input a set of class weights and produces class probabilities. \textcolor{black}{Here, $\vec{w}_i$ represents a categorical distribution with class weights, individually denoted by $w_i^j$}, and $\tau$ is the temperature, determining the steepness of $f(\cdot)$. 
\textcolor{black}{
\begin{equation}
\label{eq:softmax}
    f(\vec{w}_i)_k = \frac{\exp{\big(\frac{w_i^k}{\tau}\big)}}{\sum_{j=1}^m \exp{\big(\frac{w_i^j}{\tau}\big)}} 
\end{equation}
}

\noindent While setting the temperature near-zero would satisfy our objective of producing a one-hot vector, where the outputted class probability of the index of the most impactful feature would be one, this operation can lead to a large variance within the gradients and unstable training. We therefore set the softmax temperature, $\tau$ equal to 1, which we find effective empirically, and utilize a differentiable $\text{one\_hot}$ function, $g(\cdot)$, to produce a one-hot vector with the element associated with the highest-weighted class set to one and all other elements set to zero. We display the procedure for determining the one-hot vector associated with the largest magnitude in Equation \ref{eq:crispification}. 
\begin{equation}
\label{eq:crispification}
    \vec{z_i} = g(f(|\vec{w_i}|))
\end{equation}
Here, $|\vec{w_i}|$ represents a vector with absolute elements within $\vec{w_i}$. \emph{We maintain differentiability in the procedure described in Equation \ref{eq:crispification} by utilizing the straight-through trick \citep{Bengio2013EstimatingOP}}. This allows us to obtain the desired output, $\vec{z}_i$, a one-hot vector over weights required for the purpose of matching the decision node representation of a decision tree, while maintaining gradients for all weight parameters $\{w_i^j\}_{j=1}^m$ (by treating the gradient with respect to $\vec{z_i}$ as the gradient with respect to $f(|\vec{w_i}|)$). This procedure is further elaborated in Section \ref{sec:diff_argmax} and Algorithm \ref{alg:argmax}. 
\color{black}

The one-hot encoding $\vec{z_i}$ can then be element-wise multiplied by the original weights to produce a new set of weights with only one active weight, $\vec{z}_i\circ \vec{w}_i\rightarrow\vec{w}_i'$. Accordingly, the decision node representation is transferred from $\sigma(\alpha(\vec{w}^T_i \vec{x} - b_i)) \to  \sigma(\alpha(\vec{w}_{i}'^{T} \vec{x} - b_i))=\sigma(\alpha(w_i^k x^k - b_i))$, where $k$ is the index of the most impactful feature. 
We provide an algorithm detailing the \textsc{Node$\_$Crisp} procedure in Algorithm \ref{alg:node_crisp}, where node crispification recasts each decision node to split upon a single dimension of the input.

\begin{algorithm}[H]
\caption{\textcolor{black}{Node Crispification: \textsc{Node$\_$Crisp}$(\cdot)$}}
\label{alg:node_crisp}
\textbf{Input}: \small The original fuzzy decision node $\sigma(\alpha(\vec{w}^T_i \vec{x} - b_i))$, where $i$ is the decision node index, $\vec{w}_i=[w_i^1, w_i^2, ..., w_i^j, w_i^{j+1}, ..., w_i^m]^T$, and $m$ is the number of input features \\
\textbf{Output}: \small The intermediate decision node representation $\sigma(\alpha(w_i^k x^k - b_i))$ (see the green box in Figure \ref{fig:node_outcome_crisp}) 
\begin{algorithmic}[1] 
\STATE $\vec{z}_i=$ \textsc{diff\_argmax}$(|\vec{w}_i|)$ (\textsc{diff\_argmax}$(\cdot)$ displayed in Algorithm \ref{alg:argmax})
\STATE $\vec{w}_i' = \vec{z}_i\circ \vec{w}_i$
\STATE $\sigma(\alpha(w_i^k x^k - b_i)) = \sigma(\alpha(\vec{w}_i'^T \vec{x} - b_i))$ 
\end{algorithmic}
\end{algorithm}

Node crispification takes as input the original fuzzy decision node, $\sigma(\alpha(\vec{w}^T_i \vec{x} - b_i))$, where \emph{all} input features are used in determining the output of decision node $i$. The output of this function is an intermediate decision node, $\sigma(\alpha(w_i^k x^k - b_i))$, where the output of decision node $i$ is only determined by \emph{a single }feature, $x^k$. To perform this transformation, in Line 1, we use the differentiable argument max function (in Algorithm \ref{alg:argmax}) to produce a one-hot vector, $\vec{z}_i$, with the element associated with the most impactful feature set to one and all other elements set to zero. In Line 2, we element-wise multiply the one-hot encoding, $\vec{z_i}$, by the original weights, $\vec{w}_i$, to produce a new set of weights with only one active weight, $\vec{w}_i'$. In Line 3, we show that by multiplying $\vec{x}$ by $\vec{w}_i'$, we can obtain the intermediate decision node $\sigma(\alpha(w_i^k x^k - b_i))$, where $k$ is the index of the most impactful feature (i.e., $k=\argmax_j(|w_i^j|)$). The transformation to each decision node performed by node crispification is further displayed by the green arrow in Figure \ref{fig:node_outcome_crisp}.

\begin{figure}[ht]
\centering
\includegraphics[width=0.36\textwidth]{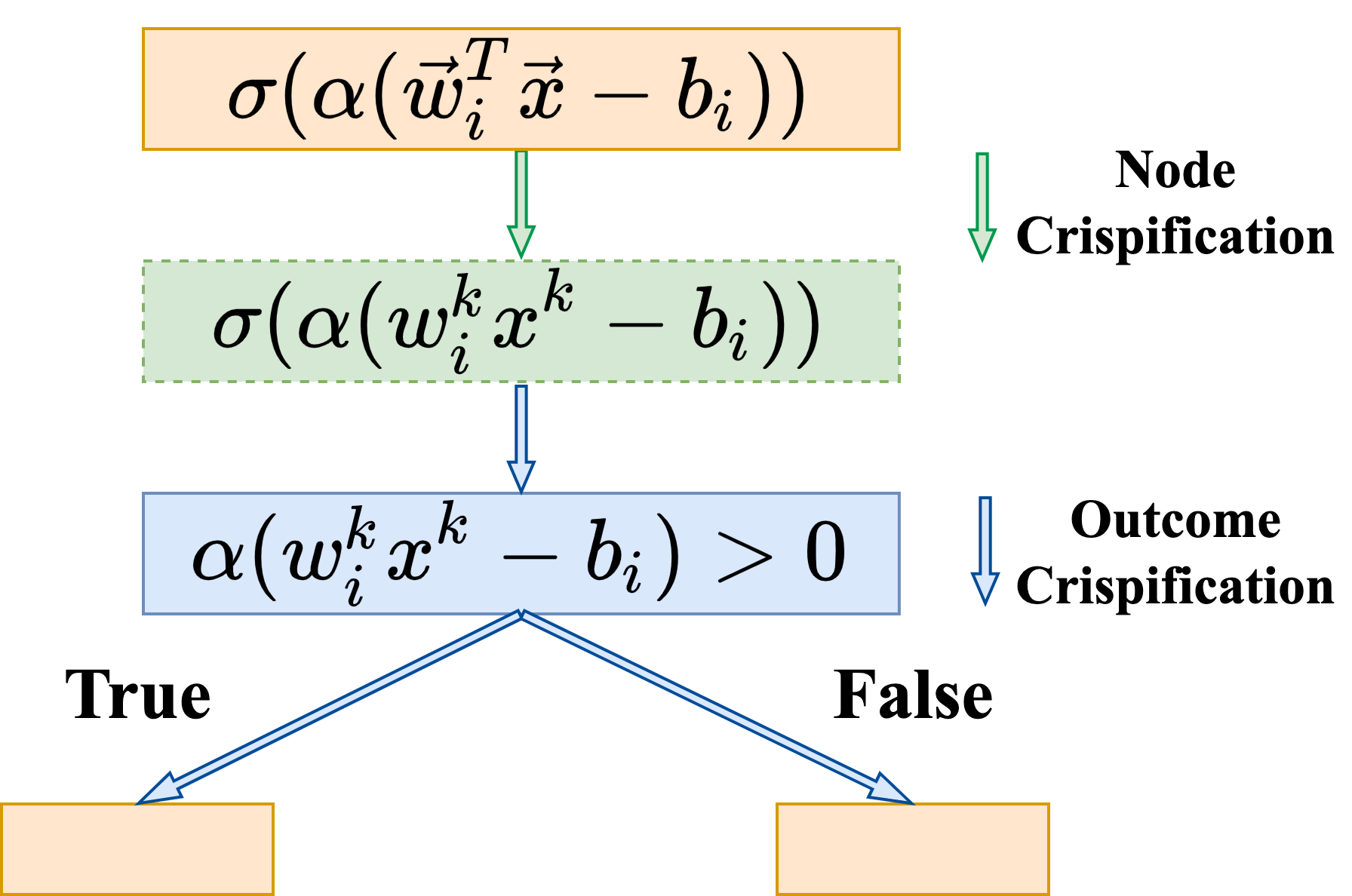}
\caption{This figure displays the process of differentiable crispification, including node crispification (Algorithm \ref{alg:node_crisp}) and outcome crispification (Algorithm \ref{alg:outcome_crisp}). The node crispification sparsifies the weight vector, $\vec{w_i}$, and chooses the most impactful feature. The outcome crispification enforces a ``hard'' decision at the node rather than a ``soft'' decision, so the computation proceeds along one branch. Both operations are differentiable through the use of the straight-through trick. }
\label{fig:node_outcome_crisp}
\end{figure}

\noindent Below, we conduct a short example detailing our procedure.

\color{black}
\textit{Example:} Consider we have a two-leaf decision tree (one decision node) with an input feature, $\vec{x}=[2,3]^T$ with a cardinality of 2 (i.e., $m = 2$), associated weights of $\vec{w_1} = [2,1]^T$, and a bias term $b_1$ of 1. The sigmoid steepness, $\alpha$, is also set equal to 1 for simplicity. It is easily seen that the most impactful weight within the decision node is $w_1^1=2$. Utilizing Equation \ref{eq:crispification}, we can compute $\vec{z_1}=[1,0]^T$. Multiplying $\vec{z_1}$ to the original weights, $\vec{w_1}$, and input feature, $\vec{x}$, subtracting $b_1$, and scaling by $\alpha$, we have an crisp decision node $\sigma(2x^1-1)$ or $\sigma(3)=0.95$. Here, $0.95$ is the probability that the decision node evaluates to \textsc{True}. We display a depiction of this example in the left-hand side of Figure \ref{fig:example_diff}.

\subsubsection{Decision Outcome Crispification}
\label{sec:decision_outcome_crispification}
Here, we describe the second piece of our online differentiable crispification procedure, noted as \textsc{Outcome$\_$Crisp} in Algorithm \ref{alg:algorithm_training}.
\textsc{Outcome$\_$Crisp} translates the outcome of a decision node so that the outcome is a Boolean decision rather than a probability generated via a sigmoid function (i.e., $p =y_i$ for True/Left Branch and $q=1-y_i$ for False/Right Branch). We start by creating a soft vector, $\vec{v}_i = [\alpha(w_i^k x^k - b_i), 0]$, for the $i^{th}$ decision node. Placing $\vec{v}_i$ through a softmax operation, we can produce the probability of tracing down the left branch or right. 
We can then apply the differentiable $\text{one-hot}$ function, $g(\cdot)$,
to produce a hard decision of whether to branch left or right, denoted by $y_i$ and described by Equation \ref{eq:decision_outcome_crispfication}. 
\begin{equation}
    \label{eq:decision_outcome_crispfication}
    [y_i,1-y_i] = g(f(\vec{v}_i)) 
\end{equation}
Essentially, the decision node will evaluate to \textsc{True} if $\alpha(w_i^k x^k - b_i)>0$ and right otherwise. This process can be expressed as an indicator function $ \mathbbm{1}(\alpha(w_i^k x^k - b_i)>0)$. 

We note the procedure of $g(f(\vec{v}_i))$ is highly similar to that in Equation \ref{eq:crispification}, both outputting a one-hot vector, with the former input being the decision node weights, $|\vec{w}_i|$, and the latter input being the soft vector representation of the decision node outcome, $\vec{v}_i$.
We provide an algorithm detailing the \textsc{Outcome$\_$Crisp} procedure in Algorithm \ref{alg:outcome_crisp}, where outcome crispification translates the outcome of a soft decision node to a hard decision node, resulting in a Boolean output from the decision node rather than a set of probabilities.

\begin{algorithm}[H]
\caption{\textcolor{black}{Outcome Crispfication: \textsc{Outcome$\_$Crisp}$(\cdot)$}}
\label{alg:outcome_crisp}
\textbf{Input}: \small The intermediate decision node $\sigma(\alpha(w_i^k x^k - b_i))$,  where $i$ is the decision node index, $k=\arg\max_j(|w_i^j|)$, and $w_i^j$ is the $j$th element in $\vec{w}_i$ \\
\textbf{Output}: \small Crisp decision node $\mathbbm{1}(\alpha(w_i^k x^k - b_i)>0)$ (see the blue box in Figure \ref{fig:node_outcome_crisp}) 
\begin{algorithmic}[1] 
\STATE $\vec{v}_i = [\alpha(w_i^k x^k - b_i), 0]$
\STATE $\vec{z}_i'=$ \textsc{diff\_argmax}$(\vec{v}_i)$ (\textsc{diff\_argmax}$(\cdot)$ displayed in Algorithm \ref{alg:argmax})
\STATE $\mathbbm{1}(\alpha(w_i^k x^k - b_i)>0) = \vec{z}_i'[0]$
\end{algorithmic}
\end{algorithm}

Outcome crispification takes in the intermediate decision node $\sigma(\alpha(w_i^k x^k - b_i))$, which outputs the probability of branching left. The output of \textsc{Outcome$\_$Crisp} is the crisp decision node, $\mathbbm{1}(\alpha(w_i^k x^k - b_i)>0)$, a Boolean decision to trace down to the left branch OR right. In Line 1, we construct a soft vector representation of the decision node $i$'s output, $\vec{v}_i$, by concatenating $\alpha(w_i^k x^k - b_i)$ with a $0$. In Line 2, we use the differentiable argument max function (in Algorithm \ref{alg:argmax}) to produce a one-hot vector, $\vec{z}_i'$, where the first element represents the Boolean outcome of the decision node. In Line 3, we show that the output of the crisp decision node, $\mathbbm{1}(\alpha(w_i^k x^k - b_i)>0)$, can be obtained by choosing the first element of vector $\vec{z}_i'$ (we use bracket indexing notation here, starting with zero).  We further display the transformation performed by outcome crispification by the blue arrows in Figure \ref{fig:node_outcome_crisp}. 

\color{black}
\textit{Example: Continuing the example in Section \ref{sec:decision_nodes_crispification}}, we can take the outputted crisp decision node and generate a vector $\vec{v}_1 =[2x^1-1,0]^T$, or by substituting the input feature, $\vec{v}_1 =[3,0]^T$. Performing the operations specified in Equation \ref{eq:decision_outcome_crispfication}, we receive the intermediate output from the softmax $[0.95, 0.05]^T$ (rounded to two decimal places), the one-hot vector $[1,0]^T$ after performing the $\text{one\_hot}$ operation, and finally $y_1 = 1$, denoting that the decision-tree should follow the left branch.  We display a depiction of this example on the right-hand side of Figure \ref{fig:example_diff}.

\begin{figure}[ht]
\centering
\includegraphics[width=\textwidth]{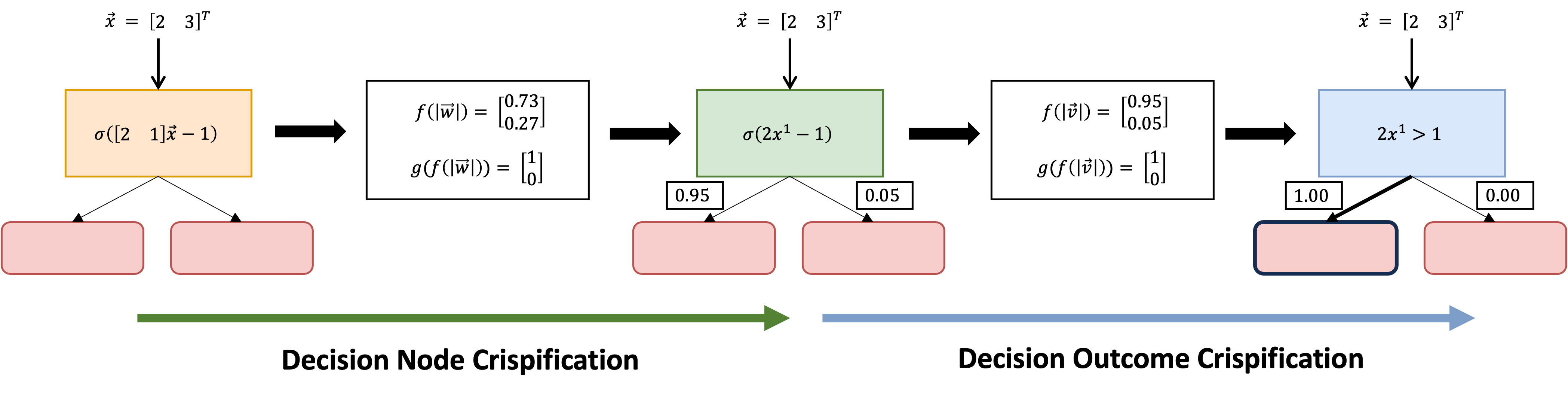}
\caption{This figure displays the process of decision node crispification and decision outcome crispification across the Examples within Section \ref{sec:decision_nodes_crispification} and Section \ref{sec:decision_outcome_crispification}.}
\label{fig:example_diff}
\end{figure}

\textit{Conversion to a Simple Form:}
The above crispification processes produce decision-tracing equal to that of a DT. The node representation can thus be simplified to that of Figure \ref{fig:dt} by algebraically reducing each crisp decision node to $ x^k>\frac{b_i}{w_i^k}$ (given $\alpha w_i^k>0$) or $ x^k<\frac{b_i}{w_i^k}$ (given $\alpha w_i^k<0$).

\subsubsection{Sparse Linear Leaf Controllers}
\label{sec:sparse_sub_controllers}
After applying the decision node and outcome crispification to all decision nodes and outcomes, the decision can be routed to leaf node (Step 3). This section describes the procedure to translate a linear leaf controller to condition upon $e$ features (\textsc{Enforce$\_$Controller$\_$Sparsity} procedure in Algorithm \ref{alg:algorithm_training}), enforcing sparsity within the leaf controller and thereby, enhancing ICCT interpretability. As noted in Section \ref{sec:icct_arch}, our ability to utilize sparse sub-controllers allows us to balance between interpretability and performance. The sparsity of the linear sub-controllers ranges from setting $e=0$ and maintaining static leaf distributions, where each leaf node contains scalar value representing the mean (i.e., ICCT-static in Section \ref{sec:results}), to $e=m$, containing a linear controller parameterized by the entire feature space of $\vec{x}$ (i.e., ICCT-complete in Section \ref{sec:results}).
Equation \ref{eq:sparse_sub_impact} displays the procedure for determining a $k$-hot encoding, $\vec{u}_d$, that represents the $k$ \textcolor{black}{(or in our case, $e$)} most impactful selection weights within a leaf's linear controller. The $k$-hot function, denoted by $h(\cdot)$, takes as input a vector of weights and returns an equal-dimensional vector with $k$ elements set to one. The indexes associated with the elements set to one match the $k$ highest-weighted elements within the input feature.
\begin{equation}
    \label{eq:sparse_sub_impact}
    \vec{u}_d = h (f(|\vec{\theta}_d|))
\end{equation}

\noindent Here, $|\vec{\theta}_d|$ represents a vector with absolute elements within the per-leaf selector weights, $\vec{\theta}_d$. As before, we maintain differentiability and formulate a differentiable top-$k$ function in Equation \ref{eq:sparse_sub_impact} by utilizing the straight-through trick \citep{Bengio2013EstimatingOP} and iteratively applying \textsc{diff\_argmax}$(\cdot)$ (Algorithm \ref{alg:argmax}) $k$ times.
\textcolor{black}{In Equation \ref{eq:translate_to_sparse_submodel}, we transform a fuzzy leaf node, for leaf, $l_d$ (represented in Equation \ref{eq:complete_leaf}), into a sparse linear sub-controller, $l_d^*$, with the sparse feature selection vector, $\vec{u_d}$, given by Equation~\ref{eq:sparse_sub_impact}.}
\begin{equation}
    \label{eq:translate_to_sparse_submodel}
    l_d^* \triangleq (\vec{u}_d \circ \vec{\beta}_d)^T (\vec{u}_d \circ \vec{x}) + \vec{u}_d^T \vec{\phi}_{d}
\end{equation}
A depiction of the sparse sub-models can be seen at the bottom right-hand side of Figure \ref{fig:dt}, where the sparsity of the sub-controllers, $e$, is set to 1 and the dimension of the action space is 2.

\subsubsection{Differentiable Argument Max Function for Differentiable Crispification}
\label{sec:diff_argmax}
In this section, we provide a description of the \textcolor{black}{differentiable argument max} function. 

\begin{algorithm}[H]
\caption{Differentiable Argument Max Function for Crispification\textcolor{black}{: \textsc{diff\_argmax}$(\cdot)$}}
\label{alg:argmax}
\textbf{Input}: \small Logits $\vec{q}$ \\
\textbf{Output}: \small One-Hot Vector $\vec{h}$ 
\begin{algorithmic}[1] 
\STATE $\vec{h}_{soft} \leftarrow f(\vec{q})$
\STATE $\vec{h}_{hard} \leftarrow \textsc{one\_hot}(\textsc{argmax}(f(\vec{q})))$ \COMMENT{step 1 for function $g(\cdot)$}
\STATE $\vec{h} = \vec{h}_{hard} +\vec{h}_{soft} - $\textsc{stop\_grad}($\vec{h}_{soft}$) \COMMENT{step 2 for function $g(\cdot)$}
\end{algorithmic}
\end{algorithm}

Similar to \citep{Hafner2021MasteringAW}, we present a function call (in Algorithm \ref{alg:argmax}) that can be utilized to maintain gradients over a non-differentiable argument max operation. The function takes in a set of logits, $\vec{q}$, and applies a softmax operation, denoted by $f(\cdot)$, to output $\vec{h}_{soft}$, as shown in Line 1. In Line 2, the logits are transformed using an argument max followed by a one-hot procedure, causing the removal of gradient information, producing $\vec{h}_{hard}$. In Line 3, we combine $\vec{h}_{soft}$, $\vec{h}_{hard}$, and \textsc{stop\_grad}($\vec{h}_{soft}$) to output $\vec{h}$, where \textsc{stop\_grad}($\cdot$) keeps the values and removes the gradient data of $\vec{h}_{soft}$. The outputted value of $\vec{h}$ is equal to that of $\vec{h}_{hard}$. However, the gradient maintained within $\vec{h}$ is associated with $\vec{h}_{soft}$. Automatic differentiation frameworks can then utilize the outputted term to perform backpropagation. Here, the operations in Line 2 and Line 3 compose function $g(\cdot)$ in Equation \ref{eq:crispification} and \ref{eq:decision_outcome_crispfication}.

\textcolor{black}{\noindent \textbf{Summary:} In this section, we discuss our novel interpretable reinforcement learning architecture, ICCTs. We present a description of components of ICCTs, including decision nodes and linear leaf controllers, and provide a differentiable crispification procedure allowing for optimization similar to a differentiable decision tree (DDT) while maintaining a forward-propagation process identical to a sparse decision tree. To the best of our knowledge, we present the first truly interpretable tree-based framework for continuous control.}

\section{Universal Function Approximation}
\label{sec:ufa}
In this section, we provide a proof to show our ICCTs are universal function approximators, that is, can represent any decision surface given enough parameters. Our ICCT architecture consists of successive indicator functions, whose decision point lies among a single dimension of the feature space, followed by a linear controller to determine a continuous control output. For simplicity, we assume below that the leaf nodes contain static distributions. However, maintaining a linear controller at the leaves is more expressive and thus, the result below generalizes directly to ICCTs. 

The decision-making of our ICCTs can be decomposed as a sum of products. In Equation \ref{eq:decomp}, we display a computed output for a 4-leaf tree, where decision node outputs, $y_i$, are given by Equation \ref{eq:ddt_decision_node}. Here, the sigmoid steepness, $\alpha$ is set to infinity (transforming the sigmoid function into an indicator function) resulting in hard decision points ($y_i \in \{0,1\}$). Equation \ref{eq:decomp} shows that the chosen action is determined by computation of probability of reaching a leaf, $y$, multiplied by static tree weights maintained at the distribution, $p$.
\begin{align}
\small
    \label{eq:decomp}
    ICCT(x) &= p_1 (y_1 *y_2) + p_2 (y_1 * (1-y_2)) \\ \nonumber
    & + p_3 ((1-y_1) * y_3) + p_4*((1-y_1) *(1-y_3))
\end{align}
\normalsize
Equation \ref{eq:decomp} can be directly simplified into the form of $G(x) = \sum_{j=1}^N p_j \sigma (w_j^T x + b_j)$, similar to Equation 1 in \citep{Cybenko1989ApproximationBS}. \citep{Cybenko1989ApproximationBS} demonstrates that finite combination of fixed, univariate functions can approximate any continuous function. The key difference between our architecture is that our univariate function is an indiator function rather than the commonly used sigmoid function. Below, we provide two lemmas to show that indicator functions fall within the space of univariate functions \citep{Cybenko1989ApproximationBS}. 
\begin{lemma}
An indicator function is sigmoidal.
\end{lemma}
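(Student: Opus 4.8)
The plan is to verify directly that the Heaviside-type indicator function $\mathbbm{1}(t > 0)$ satisfies the definition of a sigmoidal function as used in \citep{Cybenko1989ApproximationBS}, namely that $\sigma(t) \to 1$ as $t \to +\infty$ and $\sigma(t) \to 0$ as $t \to -\infty$. Since the indicator function is (trivially) bounded and measurable, checking these two limits is all that is required. First I would state the definition of $\sigma(t) = \mathbbm{1}(t > 0)$ explicitly: $\sigma(t) = 1$ for $t > 0$ and $\sigma(t) = 0$ for $t \le 0$ (the value at $t = 0$ is immaterial for the limit conditions). Then I would observe that for every $t > 0$ we have $\sigma(t) = 1$ exactly, so the limit as $t \to +\infty$ is $1$; symmetrically, for every $t < 0$ we have $\sigma(t) = 0$ exactly, so the limit as $t \to -\infty$ is $0$. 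These are constant tails, so the limits hold in the strongest possible sense.

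An alternative, slightly more illuminating route is to exhibit the indicator as a pointwise limit of classical (logistic) sigmoids: $\mathbbm{1}(t > 0) = \lim_{\alpha \to \infty} \frac{1}{1 + \exp(-\alpha t)}$ for $t \ne 0$. Each member of this family is sigmoidal in the Cybenko sense, and one can note that the limiting function inherits the required asymptotic behavior. This connects cleanly to the paper's own setup, where the decision-node activation in Equation \ref{eq:ddt_decision_node} is exactly $\sigma(\alpha(\vec{w}_i^T\vec{x} - b_i))$ and the crispification procedure sends $\alpha \to \infty$ — so the lemma is essentially the statement that the $\alpha\to\infty$ limit of the DDT activation stays inside the function class for which Cybenko's approximation theorem applies. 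I would probably mention this connection for context but base the actual argument on the direct limit check, since it is cleaner and self-contained.

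There is no real obstacle here: the only thing to be slightly careful about is which precise definition of ``sigmoidal'' is being invoked. Cybenko's paper only requires the two asymptotic limits (not monotonicity or continuity), so the discontinuity of the indicator at the origin is not an issue; I would flag this explicitly to preempt the objection that $\mathbbm{1}(\cdot)$ is not continuous. The whole proof is two or three lines once the definition is pinned down, and it sets up Lemma 2 and the subsequent appeal to \citep{Cybenko1989ApproximationBS} to conclude that ICCTs with leaf outputs of the form $G(x) = \sum_{j=1}^N p_j \sigma(w_j^T x + b_j)$ are dense in the space of continuous functions on compact sets, hence universal function approximators.
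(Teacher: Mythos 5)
Your proposal is correct and matches the paper's argument: the paper's proof is precisely the observation that $\mathbbm{1}(t>0)$ satisfies Cybenko's definition of sigmoidal, i.e., $\sigma(t)\to 1$ as $t\to\infty$ and $\sigma(t)\to 0$ as $t\to-\infty$, which is your direct limit check. The extra remarks (constant tails, the $\alpha\to\infty$ logistic limit, and the note that continuity is not required) are fine context but do not change the substance.
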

\textit{Proof:} This follows from the definition of sigmoidal: $\sigma(t)\to 1$ as $t \to \infty$ and $\sigma(t)\to 0$ as $t \to -\infty$.
\begin{lemma}
An indicator function is discriminatory.
\end{lemma}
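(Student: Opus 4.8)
The plan is to follow Cybenko's original argument almost verbatim, since the only change is that our univariate function is an indicator $\mathbbm{1}(t>0)$ rather than a continuous sigmoid. Recall that a function $\sigma$ is called \emph{discriminatory} if the only finite signed regular Borel measure $\mu$ on the unit cube $I_n=[0,1]^n$ for which $\int_{I_n}\sigma(\vec{w}^T\vec{x}+b)\,d\mu(\vec{x})=0$ for \emph{all} $\vec{w}\in\mathbb{R}^n$ and all $b\in\mathbb{R}$ is the zero measure. So the goal is: assume $\int_{I_n}\mathbbm{1}(\vec{w}^T\vec{x}+b>0)\,d\mu(\vec{x})=0$ for every $\vec{w},b$, and conclude $\mu\equiv 0$.

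First I would fix an arbitrary direction $\vec{w}$ and observe that, as $b$ ranges over $\mathbb{R}$, the quantity $\int \mathbbm{1}(\vec{w}^T\vec{x}+b>0)\,d\mu(\vec{x})$ equals $\mu(\{\vec{x}: \vec{w}^T\vec{x} > -b\})$, i.e.\ the $\mu$-measure of an open half-space. Taking the difference of two such expressions at biases $-b_1$ and $-b_2$ with $b_1<b_2$ shows that $\mu$ assigns measure zero to every open ``slab'' $\{\vec{x}: b_1 < \vec{w}^T\vec{x} \le b_2\}$, and a limiting argument (monotone/dominated convergence, exactly as in Cybenko's treatment of the sigmoidal limit) shows $\mu$ assigns zero measure to every hyperplane $\{\vec{w}^T\vec{x}=c\}$ and every open half-space. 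Consequently, for the bounded measurable function $\vec{x}\mapsto \mathbbm{1}(\vec{w}^T\vec{x}+b>0)$ we get that its integral against $\mu$ vanishes, and by linearity the integral of every simple function built from half-spaces in a fixed direction $\vec{w}$ vanishes.

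Next I would package this as a statement about the Fourier transform: for fixed $\vec{w}$, consider the function $h(s) = \int_{I_n} \mathbbm{1}(\vec{w}^T\vec{x} + s > 0)\, d\mu(\vec{x})$. Since indicator functions of half-lines generate (via linear combinations and limits) the bounded functions $\cos(\cdot)$ and $\sin(\cdot)$ of $\vec{w}^T\vec{x}$ — concretely $e^{i m \vec{w}^T\vec{x}}$ can be approximated in the relevant sense — the vanishing of all these integrals forces $\int_{I_n} e^{i \vec{w}^T \vec{x}}\, d\mu(\vec{x}) = 0$ for all $\vec{w}$. This says the Fourier(-Stieltjes) transform of $\mu$ is identically zero, hence $\mu = 0$ by uniqueness of the Fourier transform. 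This is precisely the structure of Cybenko's Lemma~1 proof; the indicator case is if anything cleaner because $\mathbbm{1}(t>0)$ is already bounded and its jump discontinuity is harmless once we have shown $\mu$ charges no hyperplane.

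The main obstacle — really the only place care is needed — is the step establishing that $\mu$ gives zero mass to hyperplanes and half-spaces starting only from the hypothesis that certain half-space integrals vanish \emph{identically in $b$}; one must be careful that the half-spaces in question are open versus closed and handle the boundary hyperplane via a monotone limit of biases, and then confirm that the collection of sets with $\mu$-measure zero is rich enough to kill the Fourier transform. Everything else is a direct transcription of \citep{Cybenko1989ApproximationBS}. I would therefore state the proof as: ``The argument is identical to that of Lemma~1 in \citep{Cybenko1989ApproximationBS}; the only modification is that the univariate function is $\mathbbm{1}(t>0)$, whose boundedness and limiting behavior (Lemma~1 above) let the same measure-theoretic steps go through, after first noting that the hypothesis forces $\mu$ to vanish on all hyperplanes of the form $\{\vec{w}^T\vec{x}=c\}$.''
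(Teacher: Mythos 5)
Your proposal is correct and amounts to the same approach as the paper: the paper's proof is a one-line citation — the indicator is bounded and measurable (and, by its Lemma 1, sigmoidal), so Cybenko's Lemma 1 applies — and the half-space/Fourier-uniqueness argument you spell out is precisely the proof of that cited lemma, made slightly cleaner here because the indicator needs no $\lambda \to \infty$ limiting step. So you have simply unpacked the citation; no new idea is required beyond what the paper invokes.
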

\textit{Proof:} As an indicator function is bounded and measureable, by Lemma 1 of \citep{Cybenko1989ApproximationBS}, it is discriminatory.

\begin{theorem}
\label{thm:ufa}
Let $\sigma$ be any continuous discriminatory function. ICCTs are universal function approximators, that is, dense in the space of $C(I_n)$. 
In other words, there is a representation of ICCTs, $I(x)$, for which $|I(x) - f(x)| < \epsilon$ for all $x \in I_n$, for any function, f ($f \in C(I_n)$), where $C(I_n)$ denotes the codomain of an n-dimensional unit cube, $I_n$.
\end{theorem}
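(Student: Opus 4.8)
The plan is to reduce the statement directly to Cybenko's universal approximation theorem (\citet{Cybenko1989ApproximationBS}) by exhibiting ICCTs as a special case of the function class $G(x) = \sum_{j=1}^{N} p_j\, \sigma(w_j^T x + b_j)$ that Cybenko proves is dense in $C(I_n)$. The two lemmas already established do the real conceptual work: they show that the indicator (step) function is both sigmoidal and discriminatory, hence admissible as the activation $\sigma$ in Cybenko's theorem. So the remaining task for the theorem proper is mostly bookkeeping: show that the output of an ICCT (with static leaves, $\alpha\to\infty$) can be written in exactly the form $\sum_j p_j \sigma(w_j^T x + b_j)$.

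First I would take the sum-of-products decomposition in Equation~\ref{eq:decomp} and argue, by induction on tree depth, that for a symmetric ICCT of any depth the output is a finite sum over leaves $d=1,\dots,N_l$ of $p_d$ times a product of terms of the form $y_i$ or $(1-y_i)$ along the root-to-leaf path. Each $y_i$ is an indicator $\mathbbm{1}(w_i^k x^k - b_i > 0)$ after crispification (Sections~\ref{sec:decision_nodes_crispification}--\ref{sec:decision_outcome_crispification}), and $1-y_i$ is itself an indicator of the complementary halfspace, so each leaf-reaching probability is a product of indicators of axis-aligned halfspaces — i.e., the indicator of a box (an intersection of halfspaces). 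Next I would observe that the indicator of a box in $I_n$ is itself expressible as a finite linear combination of indicators $\sigma(w^T x + b)$ of single halfspaces (a standard inclusion–exclusion / telescoping identity; alternatively one appeals to the fact that products of ridge indicators lie in the closure of their span, which is exactly what Cybenko's discriminatory-function machinery delivers). Then the whole ICCT output collapses to a single expression $\sum_{j=1}^{N} p_j \sigma(w_j^T x + b_j)$ with $\sigma$ the indicator function, which by Lemmas~1--2 is a valid continuous (a.e.) discriminatory sigmoidal activation, so Cybenko's Theorem~1 applies verbatim and yields density in $C(I_n)$: for any $f \in C(I_n)$ and $\epsilon>0$ there is an ICCT representation $I(x)$ with $\sup_{x\in I_n}|I(x)-f(x)|<\epsilon$.

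The main obstacle — and the only place where care is genuinely needed — is the step that converts a product of indicators (a box indicator) into a finite sum of single-halfspace indicators, because the indicator function is discontinuous and Cybenko's theorem is stated for continuous sigmoidal $\sigma$; strictly one should either (i) approximate the step by a steep continuous sigmoid, invoke Cybenko for that smooth activation, and then pass to the limit controlling the error on the compact cube $I_n$ away from a measure-zero set of discontinuities, or (ii) re-run Cybenko's Hahn–Banach/Riesz argument directly with the bounded measurable discriminatory indicator, which Lemma~2 already licenses. I would go with route (ii): since the span of $\{\sigma(w^T x + b)\}$ with $\sigma$ discriminatory is dense by Cybenko's abstract argument (which only uses boundedness, measurability, and the discriminatory property — all supplied by Lemma~2), products of such indicators lie in the closed span, and therefore so does every ICCT output; density in $C(I_n)$ then transfers. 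Finally I would remark, as the excerpt already notes, that replacing static leaves $p_d$ by sparse linear controllers $l_d^*(x)$ only enlarges the representable class, so the approximation result is inherited, completing the proof.
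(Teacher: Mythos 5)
Your overall strategy is the same as the paper's: write the crisp ICCT as a sum over leaves of products of indicators (Equation~\ref{eq:decomp}), use Lemmas~1 and~2 to certify the indicator as a sigmoidal, discriminatory activation, and invoke Cybenko's Theorem~1, treating the discontinuity of the indicator separately (the paper discharges that last point by citing \citet{Selmic2002NeuralnetworkAO} for jump-continuous activations rather than re-running the Hahn--Banach argument; that difference is cosmetic). The trouble lies in the two steps you add to make the reduction explicit.

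First, the ``inclusion--exclusion / telescoping identity'' is false for $n\ge 2$: the indicator of a box is not a finite linear combination of halfspace indicators. Concretely, $\mathbbm{1}(x_1>0)\,\mathbbm{1}(x_2>0)=a\,\mathbbm{1}(x_1>0)+b\,\mathbbm{1}(x_2>0)+c$ has no solution (evaluating on the four quadrants forces $a=b=c=0$ and then $1=0$); more generally, any finite sum $\sum_j c_j\,\mathbbm{1}(w_j^Tx>b_j)$ has a jump of constant size along each entire discontinuity hyperplane, whereas a box indicator jumps only across a face. So the leaf-reaching products in Equation~\ref{eq:decomp} cannot be exactly collapsed into the single-ridge form $\sum_j p_j\,\sigma(w_j^Tx+b_j)$. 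Second, your fallback --- every ICCT output lies in the closed span of $\{\sigma(w^Tx+b)\}$, hence ``density transfers'' --- argues in the wrong direction: being contained in (the closure of) a dense family does not make the ICCT class dense. What the theorem requires is the converse, namely that for every $f\in C(I_n)$ and $\epsilon>0$ some ICCT is $\epsilon$-close; this means showing that ICCTs can realize or approximate the approximating sums supplied by Cybenko/Selmic--Lewis (which is what the paper asserts, without elaboration, when it says Equation~\ref{eq:decomp} ``can be directly simplified'' into $G(x)$), or arguing density directly, e.g.\ by partitioning $I_n$ into a fine grid with single-feature splits and using uniform continuity of $f$ to choose the leaf values. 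As written, your argument establishes containment of ICCTs in a dense class, not density of ICCTs.
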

\begin{proof}
As the propositional conditions hold for Theorem 1 in \citep{Cybenko1989ApproximationBS}, the result that ICCTs are dense in $C(I_n)$ directly follows. We note that as the indicator function 
jump-continuous, we refer readers to \citep{Selmic2002NeuralnetworkAO} whom extend UFA for $G(x) = \sum_{j=1}^N p_j \sigma (w_j^T x + b_j)$ to the case when $\sigma$ is jump-continuous. As ICCTs are dense in $C(I_n)$, ICCTs are universal function approximators.
\end{proof}

\section{Model Robustness Verification}
\label{sec:verification}
A desirable property for a controller is to be able to verify the policy, ensuring the controller outputs desirable values for a set of inputs. This often translates to answering the following question: \begin{itemize}
    \item \textit{For a range of input features, what is the range of output values that can be expected?}
\end{itemize} 
By answering this question, engineers and end-users can attain key insights into a policy's decision-making behavior and make guarantees about its behavior. Utilizing autonomous driving as an example, an engineer may want to verify that if a human is detected within $5$ meters, the acceleration of the vehicle is never above $5m/s^{-2}$. Verification of policies is vital in creating models that are safe and can help ensure that models accurately perform the purpose they are designed for.


In this section, we analyze the time complexity of verifying an ICCT. Following \cite{Chen2019RobustnessVO}, we formalize the problem of \textit{robustness verification} as follows: Consider a regression model: $f: \mathbb{R}^d \to \mathbb{R}$, where $d$ is the dimension of the input features and the output is a real-valued scalar. In \cite{Chen2019RobustnessVO}, for input, $x$, the minimal adversarial perturbation is defined by Equation~\ref{eq:min_adversarial_perturbation}, where $y=f(x)$ is the expected controller output value. The solution to this equation determines the minimum input perturbation required to have the controller output a value different from the expected value, $y$. 
\begin{equation}
\label{eq:min_adversarial_perturbation}
    r^* = \min_\lambda ||\lambda||_\infty \text{ s.t. } f(x+\lambda) \neq y
\end{equation}
As we are concerned with \textit{continuous} control policies, where a slight perturbation on the input may cause a slight perturbation on the controller output, we instead generalize $y$ to an expected output range, $y=\{a,b\}$, and search for an input perturbation that causes $y$ to fall out of the specified range, i.e., $f(x+\lambda)\notin \{a,b\}$. Following our autonomous driving example, finding the minimum adversarial perturbation allows an engineer to understand what input deviations may result in unsafe or undesirable controller outputs. Thus below, we present a discussion of the time complexity associated with solving Equation~\ref{eq:min_adversarial_perturbation} of neural network models and our ICCTs. \emph{Positively, for our ICCTs, determining the minimal input perturbation can be done in linear time.}


Due to complex nonlinearities within neural network architectures, small input perturbations can lead to large changes in predicted values \citep{Szegedy2013IntriguingPO, Carlini2016TowardsET}, making it intractable to perform verification. Often, verification is only possible if the neural network architecture follows certain desiderata \citep{Katz2017ReluplexAE} or by utilizing convex relaxations of nonlinear activation functions \citep{Salman2019ACR}. Even so, such verification, in the best case, is only NP-Complete.

ICCTs, on the other hand, can be verified in linear time. Here, we first present an analysis showing ICCT-static (ICCT maintaining static Gaussian distributions at each leaf) can be verified in linear time. We then extend this proof to ICCT-complete, where linear controllers parameterized by the input features are maintained at each leaf. For simplicity, we assume that the output value of our controller is solely determined by the mean value within the leaf distribution of our ICCTs, similar to how ICCTs are deployed during runtime. 

Assume we have a decision tree that has $n$ decision nodes and $n+1$ leaf nodes. For each decision node $i$, we define a variable, $t_i$, representing which feature is activated within the decision node, and a variable, $\eta_i$, representing the threshold maintained with the decision node (which is equal to $\frac{b_i}{w_i}$). Depending on the outcome of the decision node, the computation will further proceed to either the left or right child until arrival at a leaf node, where the leaf node contains a scalar value representing the Gaussian mean.

Following \cite{Chen2019RobustnessVO}, the key of the proof is to split the $d$-dimensional input space to $n+1$ hyperspaces corresponding to leaf nodes, such that any input will result in falling into one and only one leaf node's hyperspace. This can be done by traversing the entire tree and computing bounding boxes via a depth-first search. By definition, all input variables will reach the root node, resulting in a root node box represented by the Cartesian product $B = [-\infty, \infty] \times \cdots [-\infty, \infty]$, of cardinality $d$. Each child's box can be obtained by splitting one interval from the parent box based on the split condition (the variable selected, $t_i$, and threshold, $\eta_i$). This process can be completed until the entire tree is traversed (via a depth-first search fashion), resulting in a time complexity of $O(nd)$, where $n$ is the number of nodes and $d$ is the cardinality of the input space.

The distance from an input $x$ to a leaf node's region can be written as a vector, $\epsilon(x, B^i) \in \mathbb{R}^d$, defined by the Equation \ref{eq:cases}, where $l_t^i, r_t^i$, represent the upper and lower bound of a node's Cartesian product on dimension $t$ for leaf $i$, respectively.
\begin{equation}
\label{eq:cases}
  \epsilon(x, B^i)_t =
    \begin{cases}
      0 & \text{if $i_t \in (l_t^i,r_t^i)$}\\
      x_t-r_t^i & \text{if $x_t > r_t^i$}\\
      l_t^i-x_t & \text{if $x_t \leq l_t^i$}
    \end{cases}       
\end{equation}

Thus, the minimal distortion required to result in an incorrect output value can be obtained by Equation \ref{eq:min_distortion}, where $y=\{a,b\}$ is the output range desired, and $v_i$ is the value for leaf node i. Intuitively, Equation~\ref{eq:min_distortion} finds the minimum perturbation on a feature, $t$, that leads to a leaf node with associated output outside of the desired output range. 
\begin{equation}
\label{eq:min_distortion}
    r^* = \min_{i:v_i \notin y, t\in[d]} \epsilon(x, B^i)_t
\end{equation} 

The time complexity of the verification algorithm for ICCT robustness is $O(nd)$ due to the traversal of the tree, the combination of bounding boxes, and the minimal perturbation finding in Equation~\ref{eq:min_distortion} by iterating over all leaf node and all feature dimensions. As stated above, this results in addressing the \textit{decision problem of robustness verification} in linear time.

When extending to ICCTs with linear controllers at the leaves, we can utilize the same formalism to obtain the bounding boxes represented by Cartesian products at each leaf. However, as each leaf controller depends on input variables, the range of outputs would require extra computation based on the bounding boxes and linear controller parameters. Because of the monotony of the linear controllers with respect to each input feature, the computation is still $O(d)$ for each leaf node, and therefore we can still achieve the overall $O(nd)$ time complexity for the \textit{robustness verification}. 



\color{black}

\begin{sidewaystable*}
\resizebox{\textwidth}{!}{%
\begin{tabular}{@{}ccccccc@{}}
\multicolumn{1}{c}{\large Worst to Best:} & \multicolumn{6}{l}{\Mark{295}} \\
\specialrule{.2em}{.1em}{.1em} 
\multirow{2}{*}{Method} & \multicolumn{2}{c|}{Common Continuous Control Problems} & \multicolumn{4}{c}{\textcolor{black}{Autonomous Driving Problems}} \\ \cmidrule(l){2-7} 
& Inverted Pendulum & \multicolumn{1}{c|}{Lunar Lander} & Lane Keeping & Single-Lane Ring & Multi-Lane Ring & Figure-8\\ \specialrule{.2em}{.1em}{.1em} \multirow{2}{*}{DT} & \cellcolor[HTML]{ff9023}$155.0\pm0.9$& \cellcolor[HTML]{ff9833}$-285.5\pm15.6$  &\cellcolor[HTML]{ff8f21}$-359.0 \pm 11.0$ &  \cellcolor[HTML]{fff4ea}\textbf{123.2 $\pm$ 0.03} &\cellcolor[HTML]{ff7f00}$503.2\pm24.8$ & \cellcolor[HTML]{ffcc9d}$831.1\pm1.1$  \\   & \cellcolor[HTML]{ffb875}256 leaves (766 params)  & \cellcolor[HTML]{ffb875}256 leaves (1022 params) & \cellcolor[HTML]{ffb977}256 leaves (766 params) &  \cellcolor[HTML]{fff3e8}32 leaves (94 params)  & \cellcolor[HTML]{ffdec0}256 leaves (1022 params) & \cellcolor[HTML]{ffc996}256 leaves (766 params)  \\  \midrule \multirow{2}{*}{DT w$\backslash$ DAgger} & \cellcolor[HTML]{ffd8b4}$776.6\pm54.2$& \cellcolor[HTML]{ffe1c5}$184.7\pm17.3$  &\cellcolor[HTML]{ffe9d5}\textbf{395.2 $\pm$ 13.8} & \cellcolor[HTML]{fff0e3} 121.5 $\pm$ 0.01 &\cellcolor[HTML]{ffeede}$1249.4\pm3.4$ &  \cellcolor[HTML]{fff4ea}\textbf{1113.8 $\pm$ 9.5}  \\   & \cellcolor[HTML]{ffeede}32 leaves (94 params)  & \cellcolor[HTML]{ffefe1}32 leaves (126 params) & \cellcolor[HTML]{fff4ea}16 leaves (46 params) &  \cellcolor[HTML]{fff4ea}16 leaves (46 params)  &  \cellcolor[HTML]{fff4ea}31 leaves (122 params) &  \cellcolor[HTML]{fff4ea}16 leaves (46 params)  \\  \midrule   \multirow{2}{*}{CDDT-Crisp} & \cellcolor[HTML]{ff7f00}$5.0\pm0.0$ & \cellcolor[HTML]{ff7f00}$-451.6\pm97.3$ &\cellcolor[HTML]{ff7f00}$-43526.0\pm15905.0$ &\cellcolor[HTML]{ff7f00}$68.1\pm18.7$ &\cellcolor[HTML]{ff962f}$664.5\pm192.6$ & \cellcolor[HTML]{ff8813}$322.9\pm47.1$  \\   & \cellcolor[HTML]{fff4ea}2 leaves (5 params) & \cellcolor[HTML]{fff4ea}8 leaves (37 params) & \cellcolor[HTML]{fff3e8}16 leaves (61 params) & \cellcolor[HTML]{fff4ea}16 leaves (61 params) & \cellcolor[HTML]{fff4ea}16 leaves (77 params) &  \cellcolor[HTML]{fff4ea}16 leaves (61 params)  \\  \midrule  \multirow{2}{*}{ICCT-static} & \cellcolor[HTML]{fff2e5}\textbf{984.0 $\pm$ 10.4} & \cellcolor[HTML]{ffe2c7}\textbf{192.4 $\pm$ 10.7} & \cellcolor[HTML]{ffe7d0}374.2$\pm$55.8 &\cellcolor[HTML]{ffeddc}$120.5\pm0.5$ & \cellcolor[HTML]{fff2e5}\textbf{1271.7 $\pm$ 4.1} & \cellcolor[HTML]{ffe4cc} 1003.8$\pm$27.2  \\   & \cellcolor[HTML]{ffead7}32 leaves (125 params) & \cellcolor[HTML]{ffeddc}32 leaves (157 params) & \cellcolor[HTML]{fff3e8}16 leaves (61 params) & \cellcolor[HTML]{fff4ea}16 leaves (61 params) & \cellcolor[HTML]{fff4ea}16 leaves (77 params) & \cellcolor[HTML]{fff4ea}16 leaves (61 params) \\  \specialrule{.2em}{.1em}{.1em}   \multirow{2}{*}{ICCT-1-feature} & \cellcolor[HTML]{fff4ea}$1000.0\pm0.0$ & \cellcolor[HTML]{ffe2c7}$190.1\pm13.7$ &\cellcolor[HTML]{ffeede}$437.6\pm7.0$ &\cellcolor[HTML]{fff0e3}$121.6\pm0.5$ &\cellcolor[HTML]{fff2e5}$1269.6\pm10.7$ &\cellcolor[HTML]{ffeede}$1072.4\pm37.1$   \\   & \cellcolor[HTML]{fff2e5}8 leaves (45 params) & \cellcolor[HTML]{fff3e8}8 leaves (69 params) & \cellcolor[HTML]{fff0e3}16 leaves (93 params) & \cellcolor[HTML]{fff3e8}16 leaves (93 params) & \cellcolor[HTML]{fff4ea}16 leaves (141 params) & \cellcolor[HTML]{fff2e5}16 leaves (93 params)    \\  \midrule  \multirow{2}{*}{ICCT-2-feature} & \cellcolor[HTML]{fff4ea}$1000.0\pm0.0$ & \cellcolor[HTML]{ffeddc}$258.4\pm7.0$ &\cellcolor[HTML]{fff0e3}$458.5\pm6.3$ & \cellcolor[HTML]{fff0e3}\textbf{121.9 $\pm$ 0.5} & \cellcolor[HTML]{fff4ea}1280.4$\pm$7.3 & \cellcolor[HTML]{fff0e3} \textbf{1088.6 $\pm$ 21.6}  \\   & \cellcolor[HTML]{fff3e8}4 leaves (29 params) & \cellcolor[HTML]{fff0e3}8 leaves  (101 params) & \cellcolor[HTML]{ffeede} 16 leaves (125 params) & \cellcolor[HTML]{fff2e5}16 leaves (125 params) & \cellcolor[HTML]{fff3e8}16 leaves (205 params) &  \cellcolor[HTML]{ffefe1}16 leaves (125 params) \\  \midrule  \multirow{2}{*}{ICCT-3-feature} & \cellcolor[HTML]{fff4ea}\textbf{1000.0 $\pm$ 0.0} & \cellcolor[HTML]{ffefe1}$275.8\pm1.5$ &\cellcolor[HTML]{ffefe1}$448.8\pm3.0$ &\cellcolor[HTML]{ffeddc}$120.8\pm0.5$ &\cellcolor[HTML]{fff4ea}\textbf{1280.8 $\pm$ 7.7} &\cellcolor[HTML]{ffead7}$1048.7\pm46.7$   \\   & \cellcolor[HTML]{fff4ea}2 leaves (17 params) & \cellcolor[HTML]{ffefe1}8 leaves (133 params) & \cellcolor[HTML]{ffebda}16 leaves (157 params) & \cellcolor[HTML]{ffefe1}16 leaves (157 params) & \cellcolor[HTML]{fff0e3}16 leaves (269 params) &  \cellcolor[HTML]{ffeede}16 leaves (157 params)  \\   \midrule  \multirow{2}{*}{ICCT-L1-sparse} & \cellcolor[HTML]{fff4ea}$1000.0\pm0.0$ & \cellcolor[HTML]{ffeede}$265.2\pm4.3$ &\cellcolor[HTML]{fff2e5}$465.5\pm4.3$ & \cellcolor[HTML]{fff0e3}$121.5\pm0.3$ & \cellcolor[HTML]{fff3e8}$1275.3\pm6.7$ & \cellcolor[HTML]{ffe3c9} $993.2\pm14.6$  \\   & \cellcolor[HTML]{fff3e8}4 leaves (29 params) & \cellcolor[HTML]{ffeddc}8 leaves (165 params) & \cellcolor[HTML]{ffe3c9}16 leaves (253 params) & \cellcolor[HTML]{ffd5ad}16 leaves (765 params) & \cellcolor[HTML]{ffc288}16 leaves (2189 params) &  \cellcolor[HTML]{ffd8b4}16 leaves (509 params)  \\  \midrule \multirow{2}{*}{ICCT-complete} & \cellcolor[HTML]{fff4ea}$1000.0\pm0.0$ & \cellcolor[HTML]{fff4ea}\textbf{300.5 $\pm$ 1.2} & \cellcolor[HTML]{fff3e8}\textbf{476.6 $\pm$ 3.1} & \cellcolor[HTML]{ffeddc}$120.7\pm0.5$ & \cellcolor[HTML]{ffeede}$1248.6\pm3.6$ & \cellcolor[HTML]{ffe3c9} $994.1\pm29.1$  \\   & \cellcolor[HTML]{fff4ea}2 leaves (13 params) & \cellcolor[HTML]{ffeddc}8 leaves (165 params) & \cellcolor[HTML]{ffe3c9}16 leaves (253 params) & \cellcolor[HTML]{ffd5ad}16 leaves (765 params) & \cellcolor[HTML]{ffc288}16 leaves (2189 params) &  \cellcolor[HTML]{ffd8b4}16 leaves (509 params)  \\  \midrule  \multirow{2}{*}{CDDT-controllers Crisp} & \cellcolor[HTML]{ff8710}$84.0\pm10.4$ & \cellcolor[HTML]{ffb065}$-126.6\pm53.5$ & \cellcolor[HTML]{ff850c}$-39826.4\pm21230.0$ & \cellcolor[HTML]{ffbd7e}$97.9\pm12.0$ & \cellcolor[HTML]{ff9228}$639.62\pm160.4$ &  \cellcolor[HTML]{ff7f00}$245.5\pm48.5$  \\   & \cellcolor[HTML]{fff4ea}2 leaves (13 params) & \cellcolor[HTML]{ffeddc}8 leaves (165 params) & \cellcolor[HTML]{ffe3c9}16 leaves (253 params) & \cellcolor[HTML]{ffd5ad}16 leaves (765 params) & \cellcolor[HTML]{ffc288}16 leaves (2189 params) & \cellcolor[HTML]{ffd8b4} 16 leaves (509 params)   \\  \specialrule{.2em}{.1em}{.1em} \multirow{2}{*}{MLP-Lower} & \cellcolor[HTML]{fff4ea}$1000.0\pm0.0$ & \cellcolor[HTML]{ffe8d3}$231.6\pm49.8$  & \cellcolor[HTML]{fff3e8}$474.7\pm5.8$ & \cellcolor[HTML]{fff0e3}\textbf{121.8 $\pm$ 0.6} & \cellcolor[HTML]{ff932a}$646.4\pm151.2$ & \cellcolor[HTML]{ffd1a6}$868.4\pm100.9$  \\   & \cellcolor[HTML]{ffefe1}79 params & \cellcolor[HTML]{fff0e3}110 params & \cellcolor[HTML]{ffeede} 127 params & \cellcolor[HTML]{fff0e3}151 params & \cellcolor[HTML]{fff2e5}221 params & \cellcolor[HTML]{fff2e5} 103 params  \\  \midrule \multirow{2}{*}{MLP-Upper} & \cellcolor[HTML]{fff4ea}$1000.0\pm0.0$ & \cellcolor[HTML]{fff2e5}$288.7\pm2.8$ & \cellcolor[HTML]{fff2e5}$467.9\pm8.5$ & \cellcolor[HTML]{fff0e3}$121.8\pm0.3$ & \cellcolor[HTML]{ffeddc}$1239.5\pm4.2$ &  \cellcolor[HTML]{ffeede}$1077.7\pm31.1$  \\   & \cellcolor[HTML]{ffebda} 121 params & \cellcolor[HTML]{ffe9d5}222 params & \cellcolor[HTML]{ffd7b2}407 params & \cellcolor[HTML]{ffd7b2}709 params & \cellcolor[HTML]{ffa854}3266 params &  \cellcolor[HTML]{ffb977}1021 params  \\  \midrule  \multirow{2}{*}{MLP-Max} & \cellcolor[HTML]{fff4ea}$1000.0\pm0.0$ & \cellcolor[HTML]{fff3e8} \textbf{298.5 $\pm$ 0.7} & \cellcolor[HTML]{fff4ea}\textbf{478.2 $\pm$ 6.7} & \cellcolor[HTML]{fff0e3}$121.7\pm0.4$ & \cellcolor[HTML]{ffca98}$1011.9\pm141.3$ & \cellcolor[HTML]{fff2e5}\textbf{1104.3 $\pm$ 9.4}   \\   & \cellcolor[HTML]{ff7f00} 67329 params & \cellcolor[HTML]{ff7f00}68610 params & \cellcolor[HTML]{ff7f00}69377 params & \cellcolor[HTML]{ff7f00} 77569 params & \cellcolor[HTML]{ff7f00}83458 params & \cellcolor[HTML]{ff7f00}73473 params  \\  \midrule  \multirow{2}{*}{CDDT} & \cellcolor[HTML]{fff4ea}\textbf{1000.0 $\pm$ 0.0} & \cellcolor[HTML]{ffe8d3}$226.4\pm44.5$ & \cellcolor[HTML]{fff2e5}$464.7\pm5.4$ & \cellcolor[HTML]{ffeddc}$120.9\pm0.5$ & \cellcolor[HTML]{ffeede}\textbf{1248.0 $\pm$ 6.4} &  \cellcolor[HTML]{ffe8d3}$1033.2\pm24.1$  \\   & \cellcolor[HTML]{fff4ea}2 leaves (8 params) & \cellcolor[HTML]{fff2e5} 8 leaves (86 params) & \cellcolor[HTML]{ffe5ce}16 leaves (226 params) & \cellcolor[HTML]{ffd8b4}16 leaves (706 params) & \cellcolor[HTML]{ffdec0}16 leaves (1036 params) &  \cellcolor[HTML]{ffdbb9}16 leaves (466 params)  \\  \midrule \multirow{2}{*}{CDDT-controllers} & \cellcolor[HTML]{fff4ea}$1000.0\pm0.0$ & \cellcolor[HTML]{fff2e5}$289.0\pm2.4$  & \cellcolor[HTML]{fff3e8}$469.7\pm11.1$ & \cellcolor[HTML]{ffeddc}$120.1\pm0.3$ & \cellcolor[HTML]{ffeede}$1243.8\pm3.6$ &  \cellcolor[HTML]{ffe5ce}$1010.9\pm25.7$  \\   & \cellcolor[HTML]{fff4ea}2 leaves (16 params) & \cellcolor[HTML]{ffe9d5} 8 leaves (214 params) & \cellcolor[HTML]{ffd6af}16 leaves (418 params) & \cellcolor[HTML]{ffb977}16 leaves (1410 params) & \cellcolor[HTML]{ffc48c}16 leaves (2092 params) & \cellcolor[HTML]{ffbf83}16 leaves (914 params)  \\   \specialrule{.2em}{.1em}{.1em} \end{tabular}

}
\caption{\textcolor{black}{In this table, we display the results of our evaluation. 
For each evaluation, we report the mean ($\pm$ standard error) and the complexity of the model required to generate such a result. Our table is broken into three segments, the first containing equally interpretable approaches that utilize static distributions at their leaves. The second segment contains interpretable approaches that maintain linear controllers at their leaves. The ordering of methods denotes the relative interpretability. The third segments displays black-box approaches. We bold the highest-performing method in each segment, and break ties in performance by model complexity. \textcolor{black}{We color table elements in association with the number of parameters and performance. 
Reddish colors relate to a larger number of policy parameters and lower average reward.}}}
\label{tab:my-table}
\end{sidewaystable*}

\section{Environments}
\label{sec:environments}
Here, we provide short descriptions across six domains used in our extensive evaluation. \textcolor{black}{We start with two common continuous control problems, Inverted Pendulum, and Lunar Lander, provided by OpenAI Gym \citep{brockman2016openai}. We then test across four autonomous driving scenarios: Lane-Keeping provided by \citet{highway-env} and Single-Lane Ring Network, Multi-Lane Ring Network, and Figure-8 Network all provided by the Flow deep reinforcement learning framework for mixed autonomy traffic scenarios \citep{Wu2017FlowAA}.} 

\noindent \textit{Inverted Pendulum:} In Inverted Pendulum \citep{todorov2012mujoco}, a control policy must apply throttle \textcolor{black}{(ranging from +3 to move left to -3 to move right) to balance a pole. The observation includes the cart position, velocity, pole angle, and pole angular velocity.} 

\begin{itemize}[leftmargin=*]
\item \textit{Lunar Lander:} In Lunar Lander \citep{parberry2017introduction,brockman2016openai}, a policy must throttle the main engine and side engine thrusters for a lander to land on a specified landing pad.
The observation is 8-dimensional, including the lander's current position, linear velocity, tilt, angular velocity, and information about ground contact. The continuous action space is two-dimensional, with the first dimension controlling the main engine thruster and the second controlling the side engine thrusters.

\item \textit{Lane-Keeping \citep{highway-env}:} A control policy must control a vehicle's steering angle to stay within a curving lane. 
\textcolor{black}{The observation is 12-dimensional, which consists of lane information and the vehicle's lateral position, heading, lateral speed, yaw rate, and linear, lateral, and angular velocity. 
The action is the steering angle to control the vehicle.}

\item \textit{Flow Single-Lane Ring Network \citep{Wu2017FlowAA}:} A control policy must apply acceleration commands to a vehicle agent to stabilize traffic flow consisting of 21 other human-driven (synthetic) vehicles. 
\textcolor{black}{The observation includes the world position and velocity of all vehicles.}

\item \textit{Flow Multi-Lane Ring Network \citep{Wu2017FlowAA}:} A control policy must apply acceleration and lane-changing commands to an ego vehicle to stabilize the flow of noisy traffic flow across multiple lanes. \textcolor{black}{The observation includes the world position and velocity of all vehicles.}

\item \textit{Flow Figure-8 Network \citep{Wu2017FlowAA}:} A control policy must apply acceleration to a vehicle to stabilize the flow in a Figure-8 network (contains a section where the vehicles must cross paths at the center of the 8), requiring the policy to adapt its control input to create a stable flow through this section. The observation is the world position and velocity of all vehicles.
\end{itemize}

\section{Results}
\label{sec:results}
In this section, we present the set of baselines we test our model, ICCTs, against. Then, we report the results of our approach versus these baselines across the six continuous control domains, as shown in Table~\ref{tab:my-table}.
All presented results are across five random seeds, and all differentiable frameworks are trained via SAC \citep{haarnoja2018soft}. Each tree-based framework is trained while maximizing performance and minimizing the complexity required to represent such a policy, thereby emphasizing interpretability. 
We release our codebase at \href{https://github.com/CORE-Robotics-Lab/ICCT}{\textcolor{blue}{https://github.com/CORE-Robotics-Lab/ICCT}}.
\subsection{Baselines}
\label{subsec:baselines}

We provide a list of baselines alongside abbreviations used for reference and brief definitions below. We compare against interpretable models, black-box models, and models that can be converted post-hoc into an interpretable form. 
For each method, we also include details regarding the number of active parameters utilized in each model (a surrogate measure for model complexity). We briefly list the following notations for an easier understanding of the computation of the number of parameters for each model discussed below. The number of leaf nodes is $N_l$ (the number of decision nodes is $N_l-1$), the dimension of the observation space is $m$, the number of active features within the leaf controllers is $e$, and the dimension of the action space is $d_a$. The calculated number of parameters is denoted as $N_p$ for each model. Our approach, ICCT-$e$-feature, has a number of parameters of $N_p=3(N_l-1)+(2e+1)d_a N_l=(2ed_a+d_a+3)N_l-3$. 

\begin{itemize}[leftmargin=*]
    \item Continuous DDTs (CDDT): We translate the framework of \citet{Silva2021EncodingHD} to function with continuous action-spaces by modifying the leaf nodes to represent static probability distributions. Here, $N_p=(m+2)(N_l-1)+d_a N_l=(d_a+m+2)N_l-m-2$. When converted into an interpretable form post-hoc, this approach is reported as CDDT-crisp which has a number of parameters, $N_p=3(N_l-1)+d_a N_l=(3+d_a)N_l-3$. 
    \item Continuous DDTs with controllers (CDDT-controllers): We modify CDDT leaf nodes to utilize linear controllers rather than static distributions. Here, $N_p=(m+2)(N_l-1)+(m+1)d_a N_l=(md_a+d_a+m+2)N_l-m-2$. When converted into an interpretable form post-hoc, this approach is reported as CDDT-controllers Crisp that has $N_p=3(N_l-1)+(m+1)d_a N_l=(md_a+d_a+3)N_l-3$. 
    \item ICCTs with static leaf distributions (ICCT-static): We modify the leaf architecture of our ICCTs to utilize static distributions for each leaf (i.e., set $e=0$). Comparing ICCT and ICCT-static displays the effectiveness of the addition of sparse linear sub-controllers. Here, the number of parameters can be computed as $N_p=3(N_l-1)+d_a N_l=(3+d_a)N_l-3$. 
    \item ICCT with complete linear sub-controllers (ICCT-complete): We allow the leaf controllers to maintain weights over all features \textcolor{black}{(no sparsity enforced, i.e., $e=m$}). Comparing ICCT-complete and CDDT-controllers displays the effectiveness of the proposed differentiable crispification procedure. Here, the number of parameters can be computed as $N_p=3(N_l-1)+(m+1)d_a N_l=(md_a+d_a+3)N_l-3$. 
    \item ICCT with L1-regularized controllers (ICCT-L1-sparse): We achieve sparsity via L1-regularization applied to ICCT-complete rather than enforcing sparsity directly via the Enforce$\_$Controller$\_$Sparsity procedure. While this baseline produces sparse sub-controllers, there are drawbacks limiting its interpretability. L1-regularization enforces weights to be near zero rather than exactly zero. These small weights must be represented within leaf nodes, and thus, the interpretability of the resulting model is limited. Here, the number of parameters can be computed as $N_p=3(N_l-1)+(m+1)d_a N_l=(md_a+d_a+3)N_l-3$. 
    \item Multi-layer Perceptron (MLP): We maintain three variants of an MLP. The first (MLP-Max) contains a very large number of parameters, typically utilized in continuous control domains. The second (MLP-Upper) maintains approximately the same number of parameters of our ICCTs with sparse leaf controllers during training, including all inactive parameters. The last (MLP-Lower) maintains approximately the same number of \emph{active} parameters as our ICCTs with sparse leaf controllers. The number of parameters of MLP depends on the size of the network, including all weights and bias parameters. 
    \item Decision Tree (DT): We train a DT via CART \citep{Breiman1983ClassificationAR} on state-action pairs generated from MLP-Max. This baseline represents policy distillation from a high-performance black-box policy to an interpretable model. Here, the number of parameters can be computed as $N_p=2(N_l-1)+d_a N_l=(2+d_a)N_l-2$. 
    \item DT w$\backslash$ DAgger: We utilize the DAgger imitation learning algorithm \citep{Ross2011ARO} to train a DT to mimic the MLP-Max policy. The number of parameters can be computed as in the DT baseline above.
\end{itemize}
\subsection{Discussion}
We present the results of our trained policies in Table \ref{tab:my-table}. We provide the performance of each method alongside the associated complexity of each benchmark in Table \ref{tab:my-table} across three sections, with the top section representing interpretable approaches that maintain static distributions at their leaves, the middle section containing interpretable approaches that maintain linear controllers at their leaves, and the bottom section containing black-box methods. 

\textbf{Static Leaf Distributions (Top):} The frameworks of DT, DT w$\backslash$ DAgger, CDDT-Crisp, and ICCT-static maintain similar representations and are equal in terms of interpretability given that the approaches have the same depth. We see that across three of the six control domains, ICCT-static is able to widely outperform both the DT and CDDT-Crisp models. In the remaining three domains, ICCT-static outperforms CDDT-Crisp by a large margin and achieves competitive performance compared to DTs, even without access to a superior expert policy.


\textbf{Linear Controller Leaf (Middle):} Here, we rank frameworks \textcolor{black}{(top-down)} by their relative interpretability. As the sparsity of the sub-controller decreases, the interpretability diminishes. We see that most approaches are able to achieve the maximum performance in the simple domain of Inverted Pendulum. However, CDDT-controllers-crisp encounters an inconsistency issue from the crispification procedure of \citep{Silva2021EncodingHD,Paleja2020InterpretableAP} and achieves very low performance. 
In regards to interpretability-performance tradeoff,
in Inverted Pendulum, we see that as sparsity increases within the sub-controller, a lower-depth ICCT can be used to achieve a equally high-performing policy. We note that across all domains, we do not find such a linear relationship. \textcolor{black}{We provide additional results within Section \ref{sec:interp-performance} that provide deeper insight into the interpretability-performance tradeoff.}

\textbf{Black-Box Approaches (Bottom):} MLP-based approaches and fuzzy DDTs are not interpretable. While the associated approaches perform well across many of the six domains, the lack of interpretability limits the utility of such frameworks in real-world applications \textcolor{black}{such as autonomous driving. We see that in half the domains, highly-parameterized architectures with over 65,000 parameters are required to learn effective policies.}

\noindent\textbf{Comparison Across All Approaches:} We see that across all continuous control domains, CDDT-Crisp and CDDT-controllers Crisp typically are the lowest-performing models. This displays the drawbacks of the crispification procedure of \citep{Silva2021EncodingHD,Paleja2020InterpretableAP} and the resultant performance inconsistency. Comparing our ICCTs to black-box models, we see that in all domains, we parity or outperform deep highly-parameterized models in performance while reducing the number of parameters required by orders of magnitude. In the difficult Multi-Lane Ring scenario, we see that we can outperform MLPs by 33$\%$ on average while achieving a $300$x-$600$x reduction in the number of policy parameters required.

Overall, we find strong evidence for our Interpretable Continuous Control Trees, displaying and validating the ability to at least parity black-box approaches while maintaining high interpretability. Our novel architecture and training procedure provide a strong step towards providing solutions for two grand challenges in interpretableML: (1) Optimizing sparse logical models such as DTs and (10) Interpretable RL.


\section{Qualitative Exposition of ICCT Interpretability}
\label{sec:qualitative}
\begin{figure}[t]
    \centering
    \includegraphics[width=0.75\textwidth]{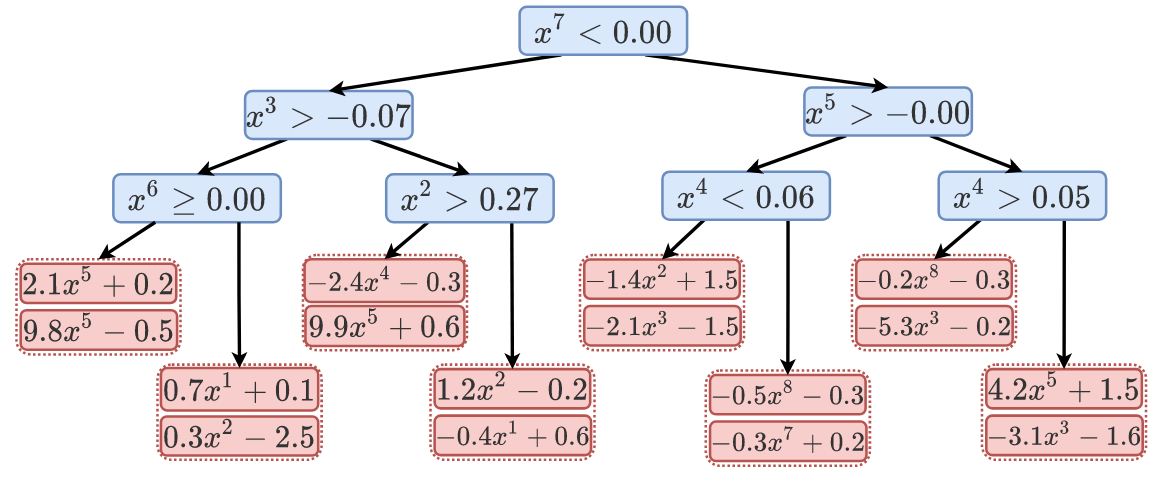}
    \caption{A Learned ICCT in Lunar Lander}
    \label{fig:icct_ll}
\end{figure}
Here, we provide a display of the utility and interpretability of a learned ICCT model. In Figure \ref{fig:icct_ll}, we present our learned ICCT model in Lunar Lander, rounding each element to two decimal places for brevity. The displayed figure is an ICCT-1-feature model (i.e., only one active feature within the sparse sub-controller). The 8-dimensional input in Lunar Lander is composed of position ($x^1$,$x^2$), velocity ($x^3$,$x^4$), angle ($x^5$), angular velocity ($x^6$), left ($x^7$) and right ($x^8$) lander leg-to-ground contact. The action space is two-dimensional: the first (dictated by the top of each pair of the red-colored leaves) controls the main engine thrust, and the second (bottom) controls the net thrust for the side-facing engines. The tree can be interpreted as follows: taking the leftmost path as an example, if the left leg is not touching the ground, the horizontal velocity is greater than -0.07 m/s, and the angular velocity is greater than 0.00 rad/s, then the main engine action is $2.1 * \text{(the lander angle)} + 0.2$, and the side engine action is $9.8 * \text{(the lander angle)} - 0.5$. Such a tree has several use cases: 1) An engineer/developer may pick certain edge cases and verify the behavior of the lander. Performing robustness verification on our ICCTs can be done in linear time (see Section \ref{sec:verification}), while DNN verification is NP-complete \citep{Katz2017ReluplexAE}. 2) An engineer can evaluate the decision-making in the tree and detect anomalies.
Furthermore, there are hands-on use-cases of such a model, such as threshold editing (directly modifying nodes to increase affordances), etc.

\begin{figure}[t]
\centering
    \begin{subfigure}[b]{0.48\textwidth}
    \includegraphics[width=0.94\textwidth, height = 5.2cm]{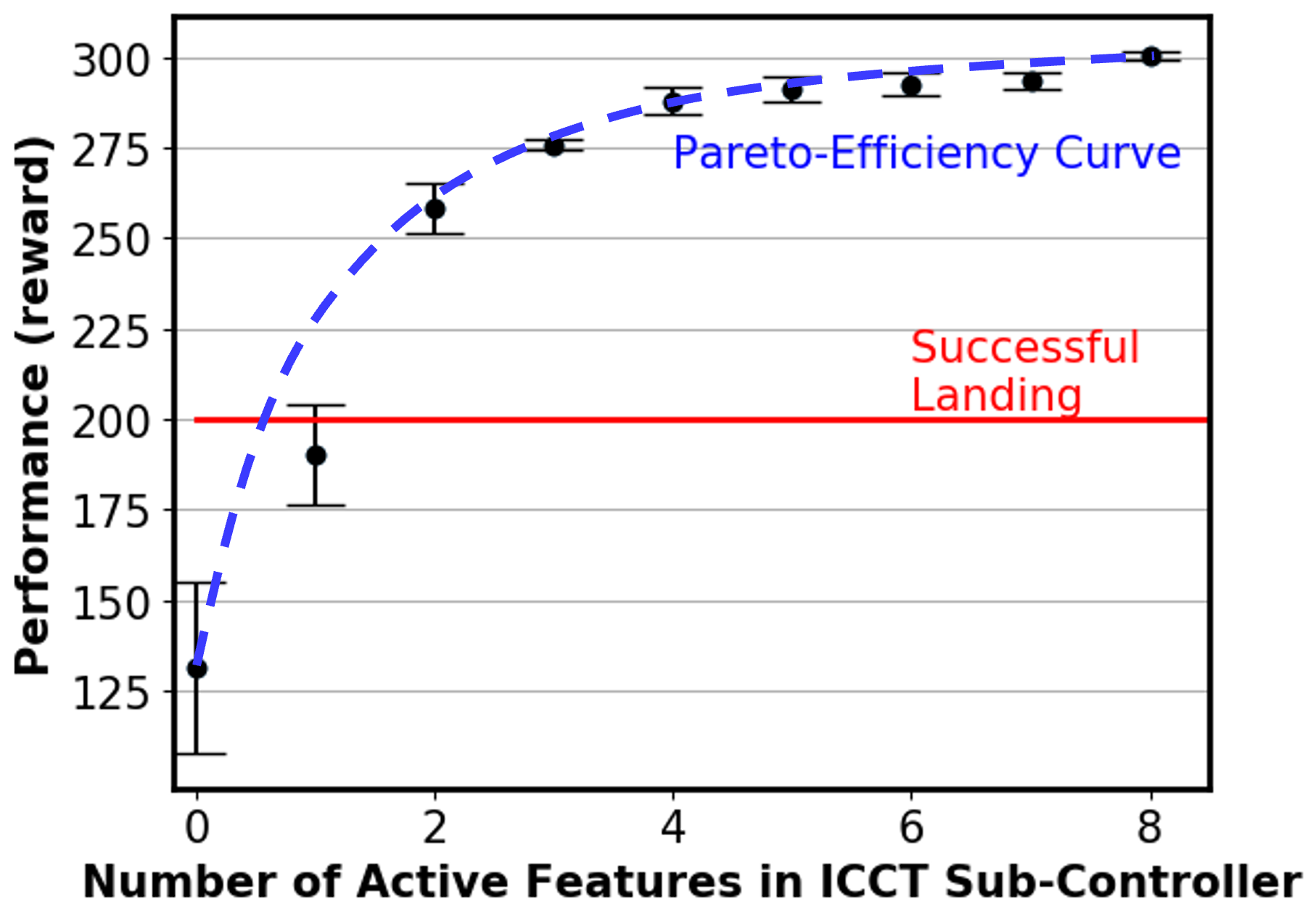}
    \caption{Performance vs. Number of Controller Features}
    \label{fig:pareto_sparse}
    \end{subfigure}
    ~~
    \begin{subfigure}[b]{0.48\textwidth}
    \includegraphics[width=0.89\textwidth,height = 5.2cm]{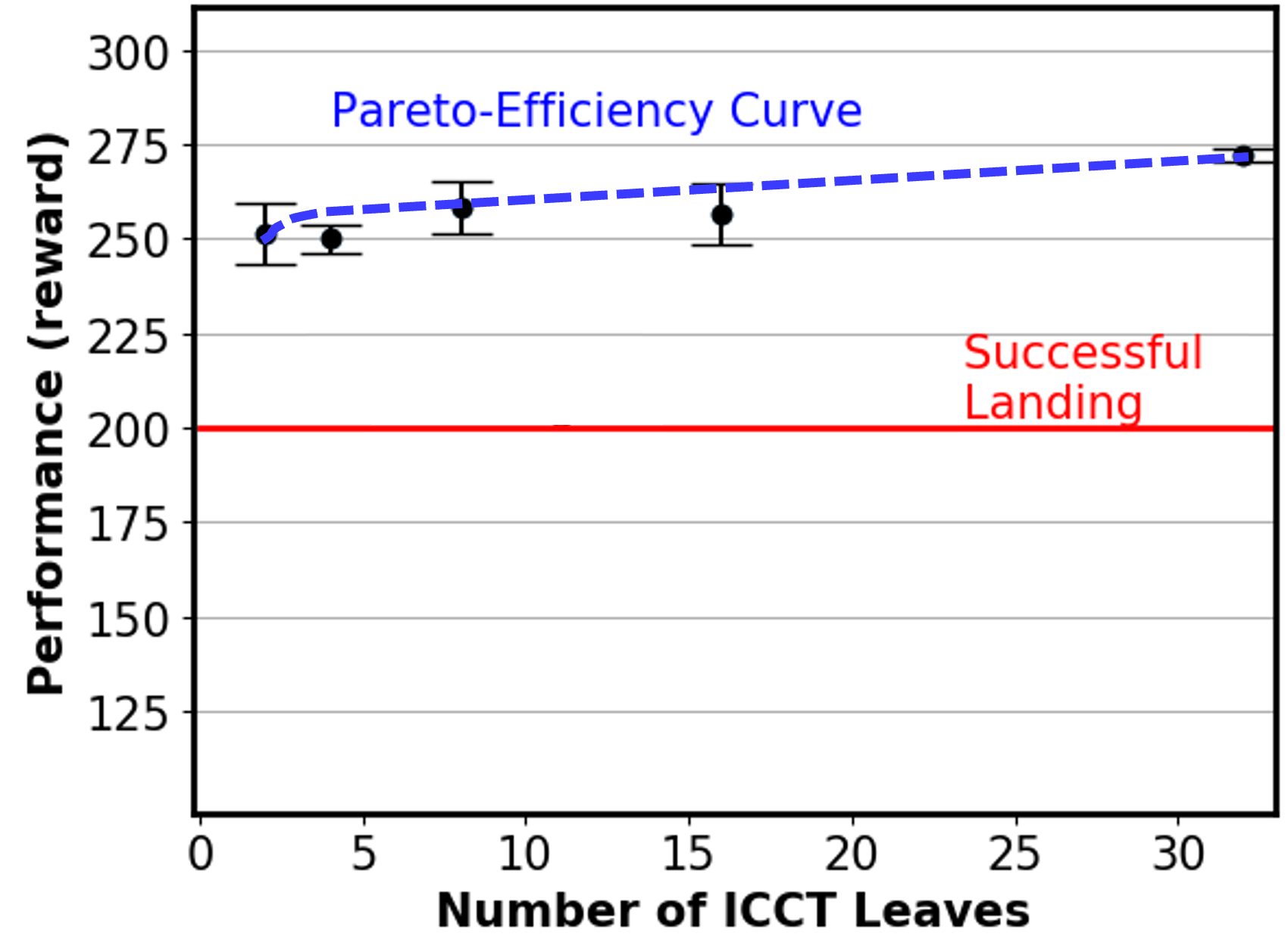}
    \caption{Performance vs. Number of ICCT Leaves}
    \label{fig:pareto_depth}
    \end{subfigure}
    
\caption{In this figure, we display the interpretability-performance tradeoff of our ICCTs with respect to the number of active features within our linear sub-controllers (Figure \ref{fig:pareto_sparse}) and the number of tree leaves (Figure \ref{fig:pareto_depth}) in Lunar Lander. Within each figure, we display the approximate Pareto-Efficiency Curve and denote the reward required for a successful lunar landing as defined by \citep{brockman2016openai}.}
\end{figure}

\section{Ablation: Interpretability-Performance Tradeoff}
\label{sec:interp-performance}
\textcolor{black}{Here, we provide an ablation study over how ICCT performance changes with respect to the number of active features within our linear sub-controllers and depth of the learned policies. \citet{lakkaraju} states that decision trees are interpretable because of their simplicity and that there is a cognitive limit on how complex a model can be while still being understandable. Accordingly, for our ICCTs to maximize interpretability, we emphasize the sparsity of our sub-controllers and attempt to minimize the depth of our ICCTs. Here, we present a deeper analysis by displaying the performance of our ICCTs while varying the number of active features, $e$, from ICCT-static to ICCT-complete (Figure \ref{fig:pareto_sparse}), and varying the number of leaves maintained within the ICCT from $N_l=2$ to $N_l=32$. We conduct our ablation study within Lunar Lander.}

\textcolor{black}{In Figure \ref{fig:pareto_sparse}, we show how the performance of our ICCTs changes as a function of active features in the Sub-Controller. Here, we fix the number of ICCT leaves to 8. We see that as the number of active features increase, the performance also increases. However, there is a tradeoff in interpretability. As greater than 200 reward is considered successful in this domain, a domain expert may determine a point on the Pareto-Efficiency curve that maximizes the interpretability-performance tradeoff.  In Figure \ref{fig:pareto_depth}, we show how the performance of our ICCTs changes as a function of tree depth while fixing the number of active features in the ICCT sub-controller to two. We see a similar, albeit weaker, relationship between performance and interpretability. As model complexity increases, there is a slight gain in performance and a large decrease in interpretability. The Pareto-Efficiency curve provides insight into the interpretability-performance tradeoff for ICCT tree depth.}

\section{Ablation: Differentiable Argument Max and Gumbel-Softmax}
\label{sec:ablation_gumbel}

\begin{figure*}[t]
\centering
\includegraphics[width=\textwidth]{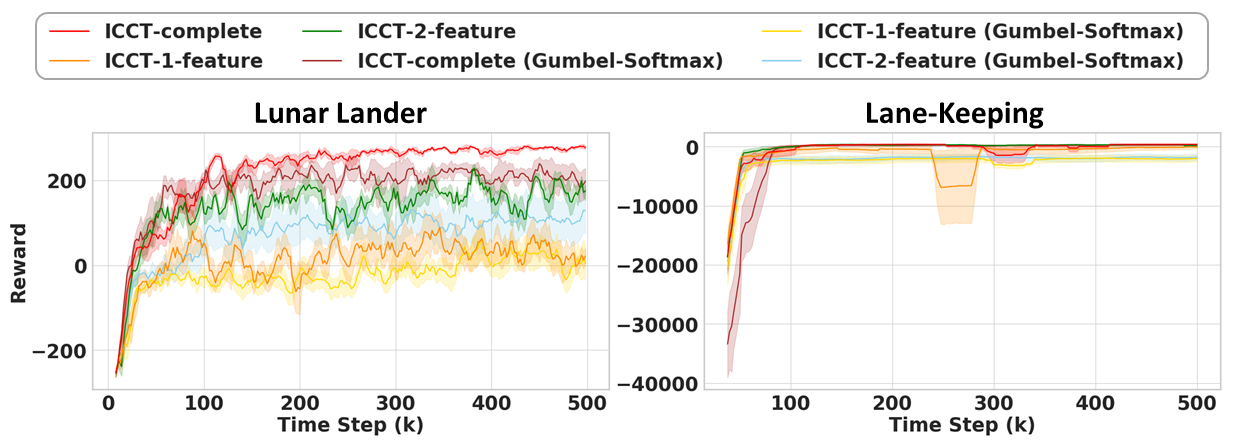}
\caption{This figure displays the average running rollout rewards of six methods for the ablation study during training. The results are averaged over 5 seeds, and the shadow region represents the standard error.}
\label{fig:ablation_gumbel}
\end{figure*}

\begin{table}[t]
\centering 
\begin{tabular}{c c c} 
\hline
Method &  Lunar Lander & Lane-Keeping \\ [0.5ex] 
\hline %
ICCT-complete & $300.5\pm1.2$ & $476.6\pm3.1$ \\
ICCT-complete (Gumbel-Softmax) & $276.7\pm7.0$ & $412.6\pm31.3$ \\
ICCT-complete (Gumbel-Softmax, Crisp) & $239.0\pm18.9$ & $309.1\pm94.6$ \\
\hline
ICCT-1-feature & $190.1\pm13.7$ & $437.6\pm7.0$ \\
ICCT-1-feature (Gumbel-Softmax) & $113.2\pm43.1$ & $-853.4\pm333.2$  \\
ICCT-1-feature (Gumbel-Softmax, Crisp) & $-20.1\pm50.0$ & $-658.114\pm345.3$  \\
\hline
ICCT-2-feature & $258.4\pm7.0$ & $458.5\pm6.3$  \\
ICCT-2-feature (Gumbel-Softmax) & $161.7\pm54.8$ & $-560.6\pm251.6$  \\
ICCT-2-feature (Gumbel-Softmax, Crisp) & $62.3\pm82.2$ & $-945.0\pm331.0$  \\
\hline 
\end{tabular}
\caption{This table shows a performance comparison between ICCTs utilizing our proposed differentiable argument max function (\textsc{diff\_argmax}$(\cdot)$ in Algorithm \ref{alg:argmax}), and a variant of ICCTs utilizing the Gumbel-Softmax function (fuzzy and crisp). Across each approach, we present our findings across Lunar Lander and Lane-Keeping and include ICCTs with fully parameterized sub-controllers (ICCT-complete) and sparse sub-controllers. } 
\label{table:ablation_gumbel} 
\end{table}
\normalsize

In this section, we provide an ablation study on the differentiable operator used in ICCTs to perform decision node crispification, perform decision outcome crispification, and enforce sub-controller sparsity. Here, we substitute the Softmax function with a Gumbel-Softmax \citep{Jang2017CategoricalRW} function, a widely-used differentiable approximate sampling mechanism for categorical variables. To allow ICCTs to utilize the Gumbel-Softmax function, as opposed to \textsc{diff\_argmax}$(\cdot)$, we modify the original Softmax function, $f$, introduced by Equation \ref{eq:softmax}, to $f'$ defined in Equation \ref{eq:gumbel_softmax}. Here, $\vec{w_i}$ is an $m$-dimensional vector, $[w_i^1, \cdots, w_i^m]^T$, and $\{g_i^j\}_{j=1}^m$ are i.i.d samples from a $\text{Gumbel}(0, 1)$ distribution \citep{Jang2017CategoricalRW}.
\begin{equation}
\label{eq:gumbel_softmax}
    f'(\vec{w_i})_k = \frac{\exp{\big(\frac{w_i^k + g_i^k}{\tau}\big)}}{\sum_j^m \exp{\big(\frac{w_i^j + g_i^j}{\tau}\big)}} 
\end{equation}

In Table \ref{table:ablation_gumbel}, we compare the performance of ICCT-complete, ICCT-1-feature, and ICCT-2-feature to their variants using Gumbel-Softmax in Lunar Lander and Lane-Keeping. All the methods and their corresponding variants are trained using the same hyperparameters.
From the results shown in Figure \ref{fig:ablation_gumbel} and Table \ref{table:ablation_gumbel},  we find that the addition of Gumbel noise reduces performance by a wide margin. Furthermore, comparing crisp ICCTs utilizing Gumbel-Softmax to ICCTs utilizing Gumbel-Softmax, we see that due to the sampling procedure within the Gumbel-Softmax, an inconsistency issue arises between fuzzy and crisp performance. Such results support our design choice of the differentiable argument max function over the Gumbel-Softmax.

\section{Physical Robot Demonstration}
\begin{figure*}[t]
    \centering
    \includegraphics[width=\textwidth]{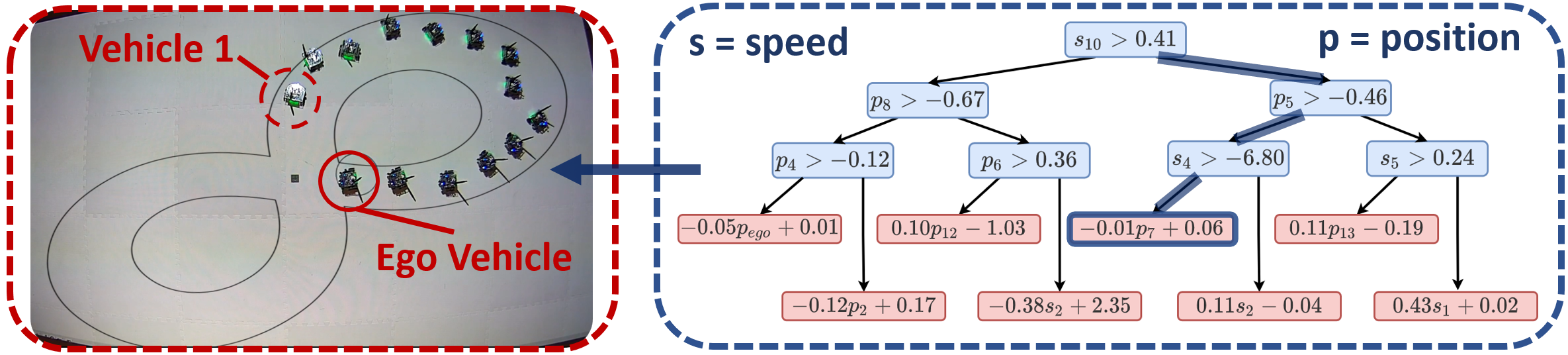}
    \caption{In this figure, we display our ICCTs controlling a vehicle in a 14-car physical robot demonstration within a Figure-8 traffic scenario. Active nodes and edges are highlighted by the right online visualization, where $s_i$ represents the speed of vehicle $i$, and $p_i$ represents the position of vehicle $i$. We include a full video, including an enlarged display of our ICCT at \href{https://sites.google.com/view/icctree}{\textcolor{blue}{https://sites.google.com/view/icctree}}.}
    \label{fig:robotarium}
\end{figure*}
\label{sec:robot}
\textcolor{black}{Here, we demonstrate our algorithm with physical robots in a 14-car figure-8 driving scenario and provide an online, easy-to-inspect visualization of our ICCTs, which controls the ego vehicle. We utilize the Robotarium, a remotely accessible swarm robotic research platform \citep{Wilson2020TheRG}, to demonstrate the learned ICCT policy. The demonstration displays the feasibility of ego vehicle behavior produced by our ICCT policy and provides an online visualization of our ICCTs. A frame taken from the demonstrated behavior is displayed in Figure \ref{fig:robotarium}. We provide a complete video of the demonstrated behavior and the online visualization of the control policy at \href{https://sites.google.com/view/icctree}{\textcolor{blue}{https://sites.google.com/view/icctree}}.}

\section{Case Studies on Complex Driving Domain Grounded in Realistic Lane Geometries}
We extend our analysis of the ICCT with experiments on two realistic driving domains modeled after 1) the I-94 highway in Michigan and 2) the I-280 highway in California, and verify ICCT's ability to drive safely on crowded highways. 

\subsection{The I-94 Domain}
The I-94 domain is built with the SUMO traffic simulator~\citep{lopez2018microscopic} and modeled off the Interstate Highway 94 from Exit 190 to Exit 192. As illustrated in Figure~\ref{fig:i94}, the I-94 domain consists of three ramp entries and three ramp exits. The task of the ego vehicle is to enter from the first ramp entry and exit from the last ramp exit and do so as quickly as possible while being safe. 

\begin{figure*}[t]
    \centering
    \includegraphics[width=\textwidth]{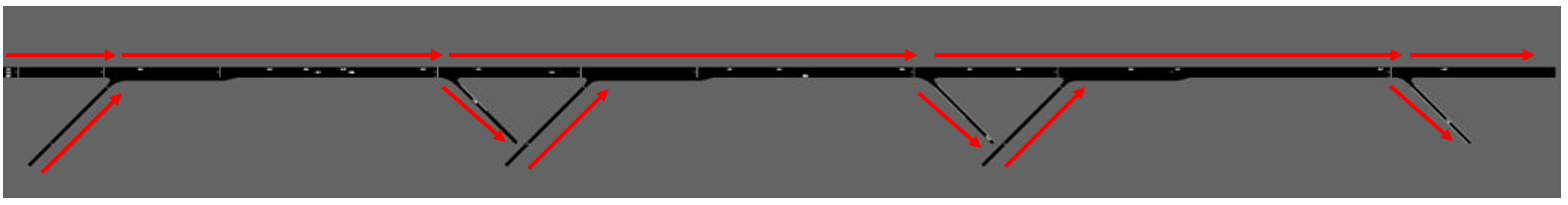}
    \caption{This figure illustrates the I-94 domain. The red arrows denote the traffic flow directions. There are four traffic inflows: one highway inflow (leftmost) and three ramp inflows. There are also four traffic outflows: highway outflow (rightmost) and three ramp outflows. }
    \label{fig:i94}
\end{figure*}
\begin{table*}[t]
\footnotesize
\begin{center}
 \tabcolsep=5pt\relax
\begin{tabular}{lccccccccc}
 \hline
 \multirow{4}{*}{Metrics} & \multicolumn{5}{c}{ICCT} & \multicolumn{3}{c}{MLP} \\
 \cline{2-5}\cline{6-9}
  & 2 leaves & 2 leaves & 4 leaves & 4 leaves & 8 leaves & \multirow{2}{*}{Lower} & \multirow{2}{*}{Upper} & \multirow{2}{*}{Max} \\
  & 1-feature & 2-feature & 1-feature & 2-feature & 1-feature &&&
  \\
  \hline
 Environment Returns & 878.72 & 899.65 & 858.62 & \textbf{910.93} & 871.80 & 907.11 & 901.29 & 897.13 \\
 \# number of parameters & 15 & 23 & 33 & 49 & 69 & 46 & 862 & 71426 \\
 \# collisions & 0 & 0 & 0 & 0 & 0 & 0 & 0 & 1 \\
 \# hard-brakes & 0 & 0 & 0 & 0 & 0 & 0 & 0 & 2 \\
 \# unsafe lane changes & 14 & 7 & 30 & \textbf{0} & 9 & 13 & \textbf{0} & 3 \\
 \hline
\end{tabular}
\caption{This table shows our findings within the I-94 domain. Environment returns are the average of ten evaluation episodes after training has been completed. The remaining metrics are computed through a summation over occurrences of the respective phenomena across the ten evaluation episodes.} 
\label{tab:i94_result}
\end{center}
\end{table*}

The state space of I-94 is 19-dimensional and consists of the ego longitudinal location (i.e., progress of the driving), the ego speed, the ego latitudinal location (i.e., lane information), and the surrounding vehicles' status. We define ``surrounding'' vehicles as the leading cars and following cars on each lane with respect to the ego car, and for each surrounding vehicle, we provide the distance to the ego car as well as its velocity. As there are four lanes on I-94, the surrounding vehicles' information is 16-dimensional. The action space of the RL agent is two-dimensional, which controls the ego vehicle's acceleration and steering angle. 

As we would like to ensure both the performance and safety of the RL agent, we design a 6-component reward function, $R_{\text{I-94}} = \sum_{i=1}^6 R_i$.  
Here, $R_1$ represents the positive reward for ego speed, $R_2$ represents the negative constant time penalty, $R_3$ denotes the negative penalty for too-small headways, $R_4$ provides the negative penalty for wrong routing (i.e., the ego vehicle does not exit properly via the last ramp), $R_5$ encodes the negative penalty for emergency braking behaviors, and $R_6$ is the negative penalty for unsafe lane changes. Intuitively, $R_1$ and $R_2$ motivate the ego car to move faster, $R_4$ helps the ego car to follow the desired route, and $R_3$, $R_5$, and $R_6$ encourage the RL vehicle to be safe by keeping headways and avoid dangerous behaviors. 

\subsection{I-94 Results}
We summarize the results of the I-94 domain in Table~\ref{tab:i94_result}. We compare five sizes of ICCTs and three sizes of MLPs (as introduced in Section \ref{subsec:baselines}) across four metrics. Firstly, we find that the ICCT and MLP can achieve similar performance on the environment returns metric, with ICCT (4 leaves, 2-feature) variant achieving the highest returns. The second and third metrics we consider are the number of collisions and the number of hard brakes in the evaluation of 10 episodes after training the policies. As our driving simulator, SUMO, always prevents an actual collision by applying a high deceleration, we regard a deceleration that is higher than $10m\cdot s^{-2}$ as a collision, and a deceleration that is between $5m\cdot s^{-2}$ and $10m\cdot s^{-2}$ as a hard brake. We observe that most ICCT and MLP models achieve zero collisions and hard brakes, with the exception of MLP-Max. We hypothesize that MLP-Max may overfit to the training data and generate undesired large-model artifacts, showing the brittleness of large MLPs in general. The last metric we consider is the number of unsafe lane changes, which counts the number of instances that the ego car tries to merge onto a lane with too small of a headway or tailway, or tries to change to a non-existing lane (e.g., attempts to change to the left lane on the left-most lane). We observe that the best-performing ICCT model with 4 leaves of 2 features achieves zero unsafe lane changes and higher rewards than the best MLP model. Comparing ICCT-2-feature with 4 leaves and MLP-Upper, although the two models have similar performance on all metrics, the ICCT model maintains $94\%$ fewer parameters compared with the MLP-Upper model. Thus, we conclude that in the case study of the I-94 domain, ICCT models were able to learn to successfully accomplish the task, meeting both performance and safety objectives while being highly parameter-efficient. 

\begin{figure*}[t]
\centering
    \begin{subfigure}[b]{0.80\textwidth}
    \centering
    \includegraphics[width=0.70\textwidth]{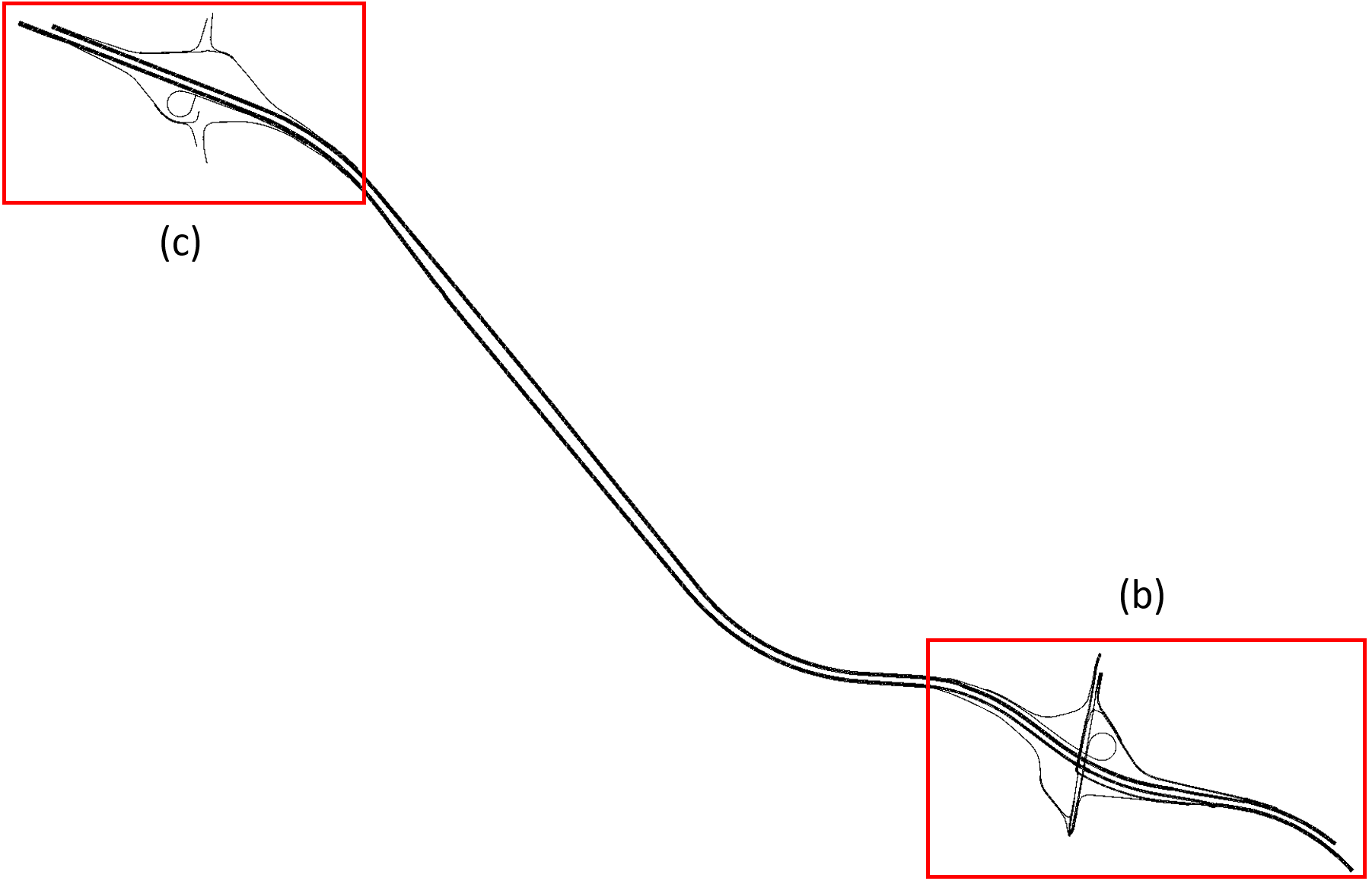}
    \caption{I280-overview}
    \label{fig:I280_overview}
    \end{subfigure}
    ~~
    \begin{subfigure}[b]{0.90\textwidth}
    \centering
    \includegraphics[width=0.90\textwidth]{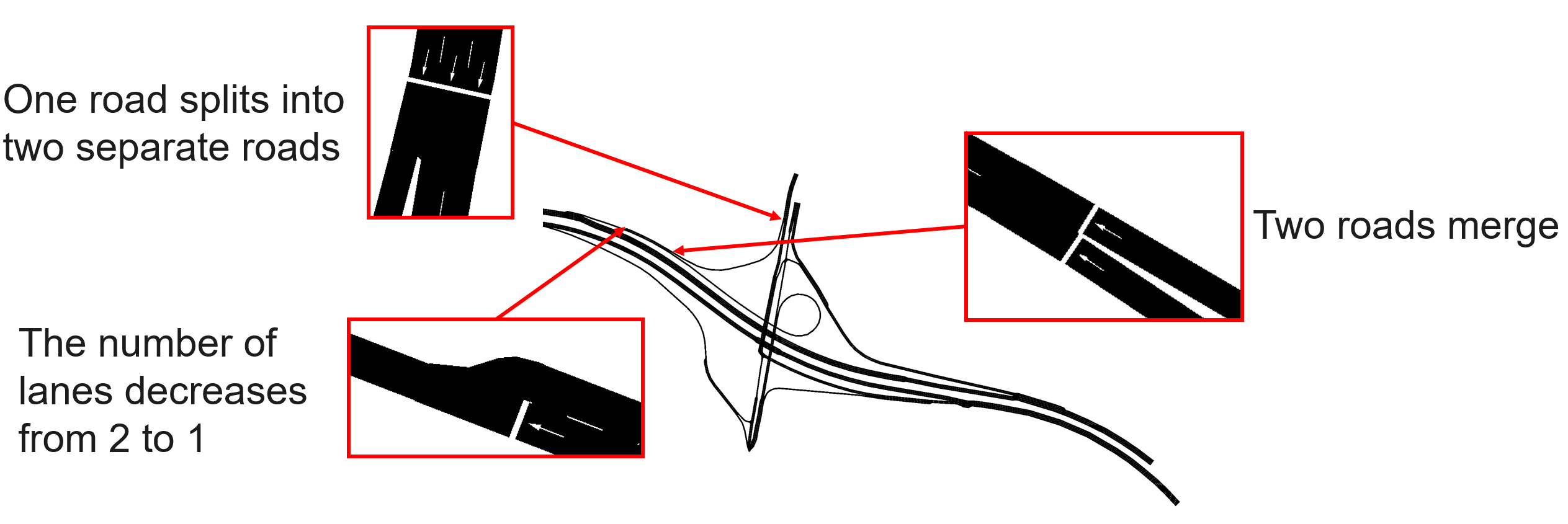}
    \caption{I280-begin}
    \label{fig:I280_begin}
    \end{subfigure}
    ~~
    \begin{subfigure}[b]{0.90\textwidth}
    \centering
    \includegraphics[width=0.90\textwidth]{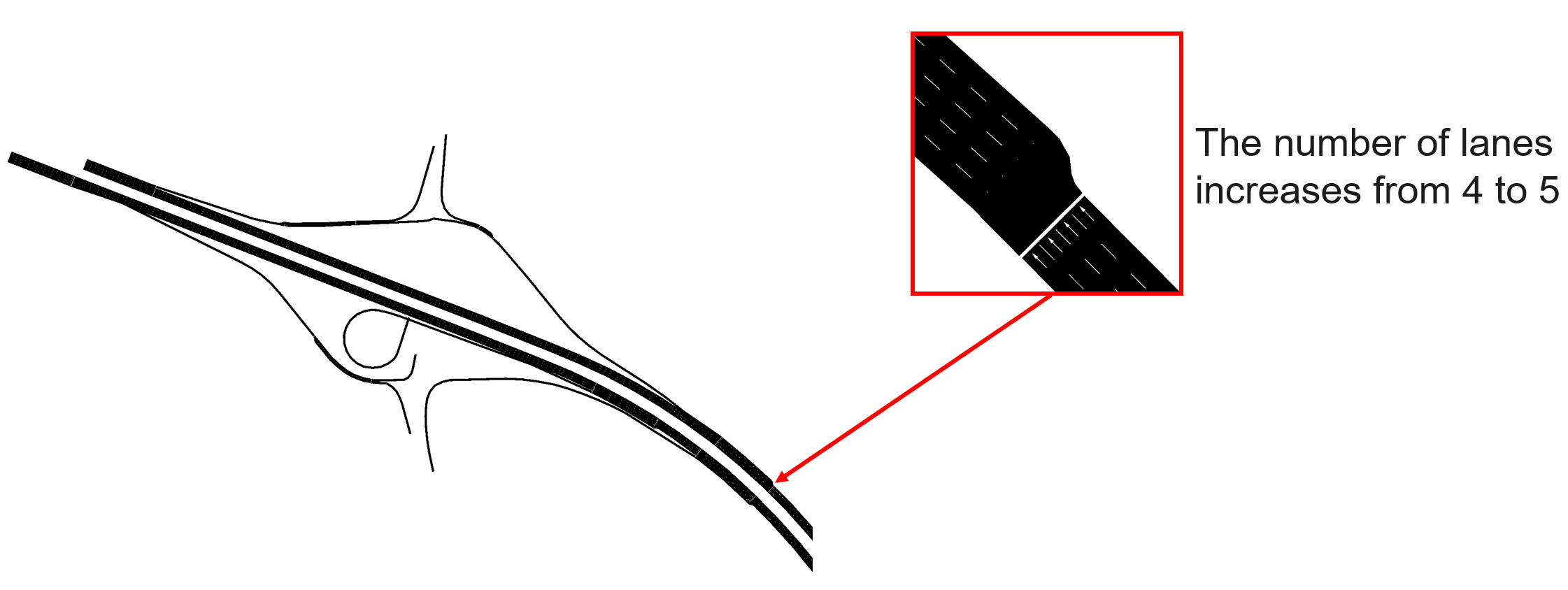}
    \caption{I280-end}
    \label{fig:I280_end}
    \end{subfigure}
\caption{\color{black} Figure \ref{fig:I280_overview} presents the overview of the I-280 domain. The ego vehicle is tasked to join the highway from the ramp and then exit the environment at the end of the highway. The ego vehicle's entrance ramp and exit is zoomed in and presented in Figure \ref{fig:I280_begin} and Figure \ref{fig:I280_end}, respectively. }
\label{fig:I280_env}
\end{figure*}

\subsection{The I-280 Domain}
The I-280 domain is a complex environment modeled off the Interstate Highway 280 around the Palo Alto area in California\footnote{Our I-280 domain can be found here: \href{https://github.com/songanz/flow_evaluation}{\textcolor{blue}{https://github.com/songanz/flow$\_$evaluation}.}}. An overview of the I-280 Domain is shown in Figure~\ref{fig:I280_overview}. The ego vehicle is tasked to join the highway from the ramp and then exit the environment at the end of the highway. As shown in Figure~\ref{fig:I280_begin} and Figure~\ref{fig:I280_end}, the I-280 domain is more challenging than the I-94 domain due to the multiple road geometries encountered across a trajectory, including road splits and the introduction and disappearance of lanes. The state space for I-280 is 19-dimensional and consists of the ego lane index, the ego speed, the speed limit within the current segment, the target lane (i.e., the lanes that lead the ego to the next road segment on its route), ego progress within the current segment, and the surrounding vehicles’ information. In I-280, the surrounding vehicles include the leading cars and
following cars on the lanes adjacent to the ego car (i.e., the left lane, the current lane of the ego car, and the right lane). For each surrounding vehicle, the domain provides its distance to the ego car and velocity information. The action space of the ego agent is two-dimensional, which controls the ego vehicle’s acceleration and steering angle. \emph{There are 119,347 passenger vehicles in total involved within this domain, creating an extremely large-scale simulation with complex, stochastic vehicle interactions.} However, as dense traffic makes the simulation significantly slow, we reduce the traffic density in the original I-280 Domain to involve vehicles that are near to the ego vehicle controlled by our model. 
The modification makes the environment renders faster and boosts the training process. 

As I-280 introduces the notion of the speed limit and complex road routing, we design an 8-component reward function, $R_{\text{I-280}} = \sum_{i=1}^8 R_i$.
Here, $R_1$ represents the positive reward for ego speed, $R_2$ represents the negative penalty if the ego vehicle is not in the target lane, $R_3$ encodes the negative penalty for too-small headways, $R_4$ provides the negative penalty for too-large headways, $R_5$ encodes a negative penalty for emergency braking behaviors, $R_6$ represents a negative penalty for unsafe lane changes, $R_7$ is a progress reward once the ego vehicle finishes $40\%$ and $60\%$ of its route, and $R_8$ is a reward if the ego vehicle arrives at its destination (i.e., accomplishing the task). Intuitively, $R_1$ and $R_4$ motivate the ego vehicle to move faster and follow traffic flow, $R_2$ helps the ego car to follow the correct route, $R_3$, $R_5$, and $R_6$ encourage the RL vehicle to be safe by keeping headways and avoiding dangerous operation, and $R_7$ and $R_8$ encourage the RL vehicle to move further in its trajectory.

\begin{figure*}[t]
    \centering
    \includegraphics[width=0.6\textwidth]{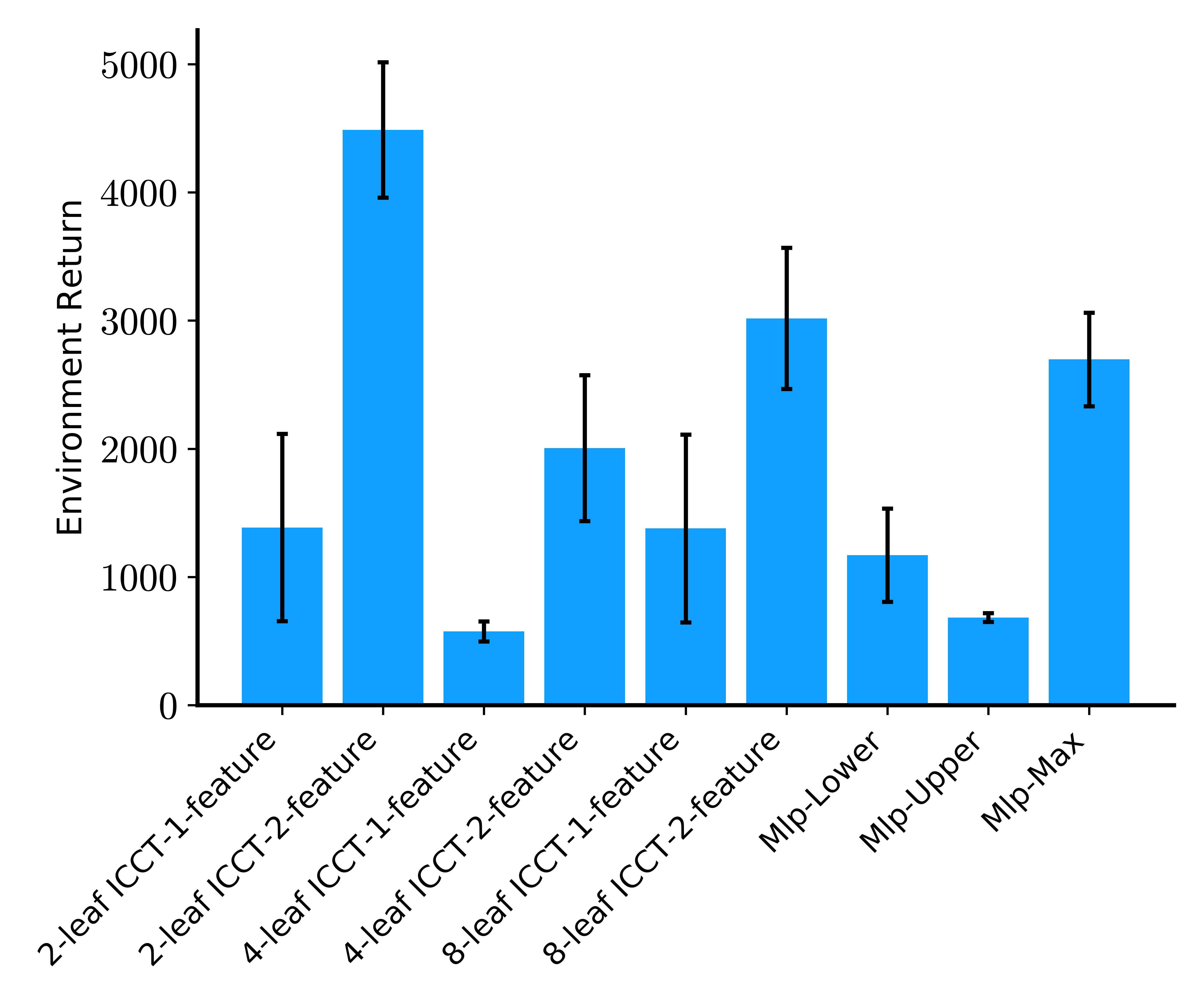}
    \caption{This figure compares the performance of ICCT agents and MLP agents in the I-280 domain. The environment returns for each model are displayed through a mean and standard deviation across ten evaluation episodes.}
    \label{fig:I280_performance}
\end{figure*}

\subsection{I-280 Results}
We have summarized the results of the I-280 case study in Figure~\ref{fig:I280_performance}. We compare six sizes of the ICCT and three sizes of an MLP. The ``environment returns'' metric represents the average episode reward in 10 rollouts. The environment returns for the 2-feature ICCT with 2 leaves and 2-feature ICCT with 8 leaves are significantly higher than the MLP models, demonstrating the ICCT's capability on more complex and realistic domains. We also observe that ICCT-2-Feature performs better than ICCT-1-feature. 

\textit{Summary:} Across both realistic autonomous driving domains, we find that ICCTs can serve as safe continuous control models that follow traffic regulations and maintain high performance with respect to the domain objectives. 

\section{Interpretability User Study}

In the previous sections, we have validated ICCT's efficacy in learning high-performance, safe policies. In this section, we seek to verify ICCT's interpretability with end-users through a human-subject experiment. To get a comprehensive understanding of ICCT's interpretability compared with neural networks, we design a $3\times3\times2\times2$ experiment. We describe the four independent variables as follows:

\textit{Models (3-levels):} 1) \textit{Tree}, 2) \textit{MLP}, and 3) \textit{Paragraph}. We compare the interpretability across the three models. The condition, \textit{Tree}, refers to an ICCT in its original tree form (e.g., Figure \ref{fig:icct_ll}). The condition, \textit{Paragraph}, refers to a text description encompassing a set of rules extracted from an ICCT. We obtain this form by first training an ICCT and then transforming it into a text description after training has been completed. During the transformation, each leaf node corresponds to a sub-paragraph, which is formed by two components, namely the conditions and the consequence. The conditions of a sub-paragraph describe the junction of all decision nodes on the path from the root node to the leaf, and the consequence is the leaf node itself. For example, a four-leaf tree corresponds to four sub-paragraphs. We hypothesize such a \textit{Paragraph} form may positively contribute to the interpretability of the ICCT as it does not assume prior familiarity with tree-based models. The condition, \textit{MLP}, displays a multi-layer perceptron.

\textit{Repeats of evaluation (3-level):} 1st, 2nd, and 3rd. To test the interpretability of the three models, we task participants to calculate the output of the model given the model parameters and its inputs. For each model, we ask participants to make predictions with the same model three times, each with varying inputs. This allows us to attain a lower-variance estimation of both the participant's accuracy and time spent in model computation. This condition also allows us to compare the learning effect of interpreting each of the models (i.e., improvements from repeated evaluations). 

\textit{Contexts (2-levels):} 1) With Context and 2) Without Context. We investigate whether providing context (i.e., a visualization of the traffic scenario) contributes to the user's rating of model interpretability. This condition helps us understand whether each model's interpretability is its inherent property from its model structure or is related to specific example contexts. 

\textit{Domains (2-levels):} 1) Multi-Lane Ring and 2) I-94. We examine whether models' interpretability is impacted by the environment's complexity. 

\begin{figure*}[t]
    \centering
    \includegraphics[width=\textwidth]{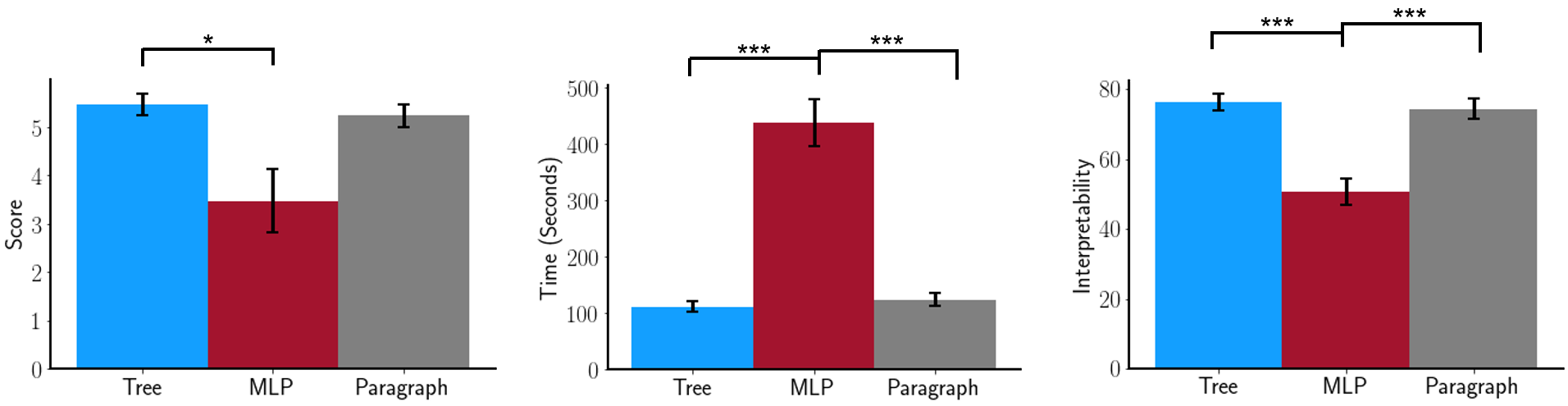}
    \caption{This figure shows the comparisons of accuracy score (left), time spent (middle), and subjective interpretability rated (right) across the three models in the I-94 user study. $^{*}$ denotes a significant difference of $p<.05$. $^{***}$ denotes a significant difference of $p<.001$. }
    \label{fig:h1-3}
\end{figure*}

Out of the four factors, we design \textit{Models} and \textit{Repeats of evaluation} to be within-subject factors as we seek to test for the learning effects and choose \textit{Contexts} and \textit{Domains} to be between-subject factors. We test for the following six hypotheses from the user study. 

\begin{itemize}[leftmargin=*]
    \item H1: \textit{Tree} and \textit{Paragraph} are easier to simulate than neural network, i.e., the simulation of \textit{Tree} and \textit{Paragraph} has higher accuracy than \textit{MLP}. 
    \item H2: \textit{Tree} and \textit{Paragraph} are quicker to validate than MLP. 
    \item H3: \textit{Tree} and \textit{Paragraph} are more interpretable than \textit{MLP}, measured by a 13-item Likert questionnaire introduced by \citep{Paleja2020InterpretableAP}. 
    \item H4: Performance improvement by repeated evaluations of \textit{Tree} is larger than of \textit{MLP}. 
    \item H5: Environment context of the decision-making increases interpretability. 
    \item H6: The advantage of \textit{Tree} and \textit{Paragraph}'s subjective interpretability over \textit{MLP} is domain-independent. 
\end{itemize}

To test for the six hypotheses, we build an online survey with Qualtrics. In each question of the survey, we show a model (\textit{Tree}, \textit{MLP}, or \textit{Paragraph}) and an input feature vector. The subject is asked to make predictions (i.e., compute the output of the model) given the respective input. We collect $N=34$ responses. The average age of participants is $25.15$ with a standard deviation of $4.28$. Out of the $34$ participants, 15 are male, and 19 are female. 

\begin{figure*}[t]
    \centering
    \includegraphics[width=0.8\textwidth]{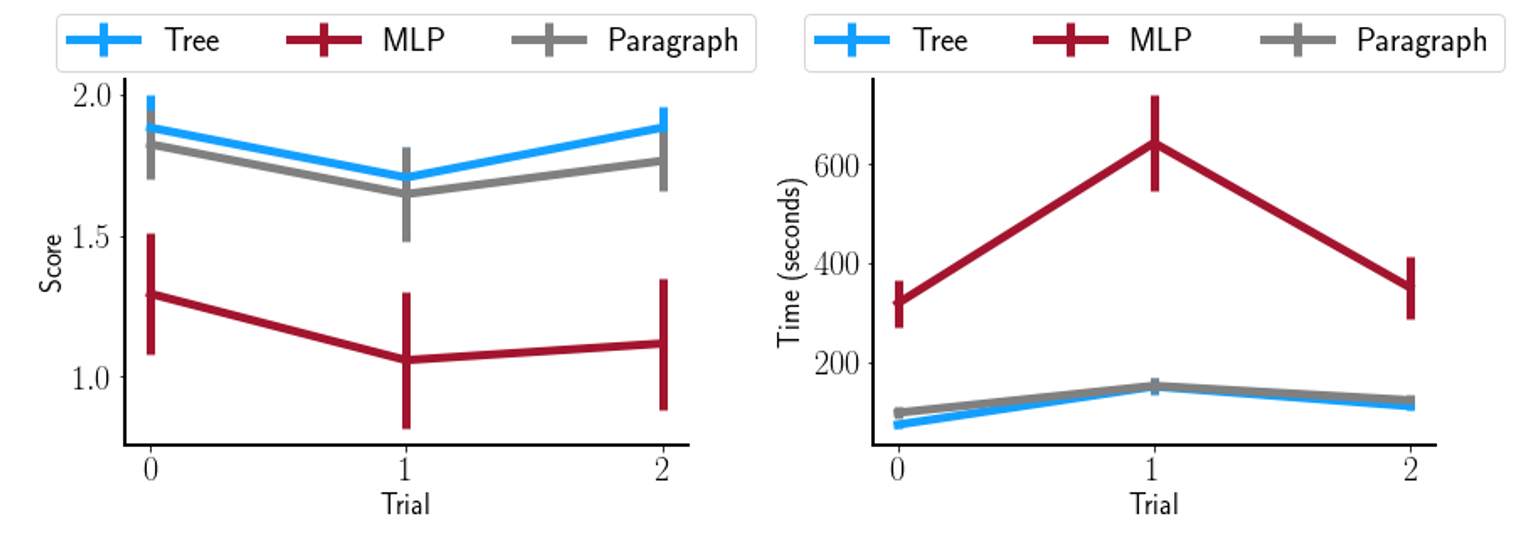}
    \caption{This figure shows the accuracy score (left) and time spent (right) changes in three repeats trials across the three models in the user study. }
    \label{fig:h4}
\end{figure*}

\subsection{User Study Results}

For H1-H5, we illustrate results on the I-94 domain, and for H6, we show the results on both the Multi-lane Ring and I-94 domains to verify the conclusions hold for both environments. For all statistical tests, the assumptions for the ANOVA test are not satisfied. Thus, we instead perform a non-parametric Friedman test followed by a posthoc Nemenyi–Damico–Wolfe (Nemenyi) test~\citep{damico1987extended}.

\begin{figure*}[t]
    \centering
    \includegraphics[width=0.6\textwidth]{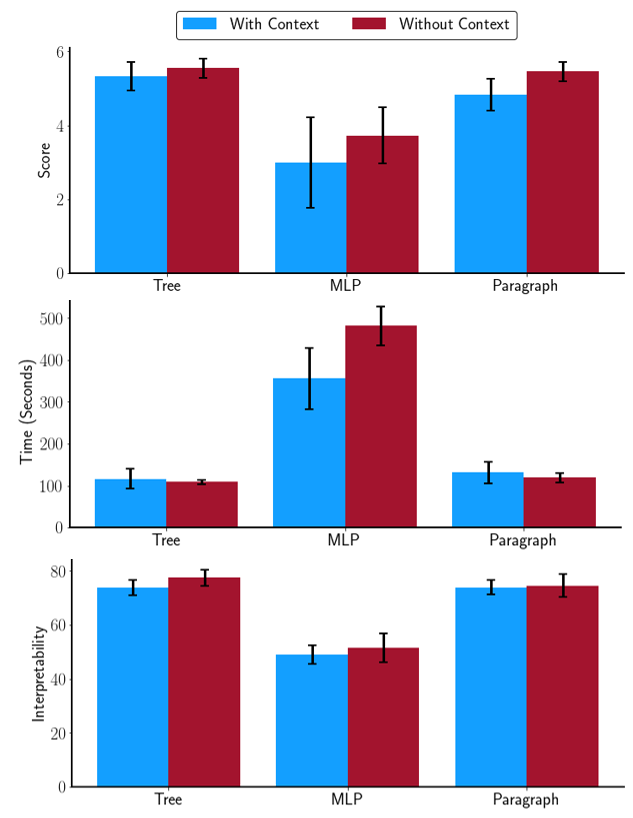}
    \caption{This figure shows the comparisons of accuracy score (left), time spent (middle), and interpretability rated (right) with or without context across the three models in the user study. }
    \label{fig:h5}
\end{figure*}

\begin{figure*}[t]
    \centering
    \includegraphics[width=0.8\textwidth]{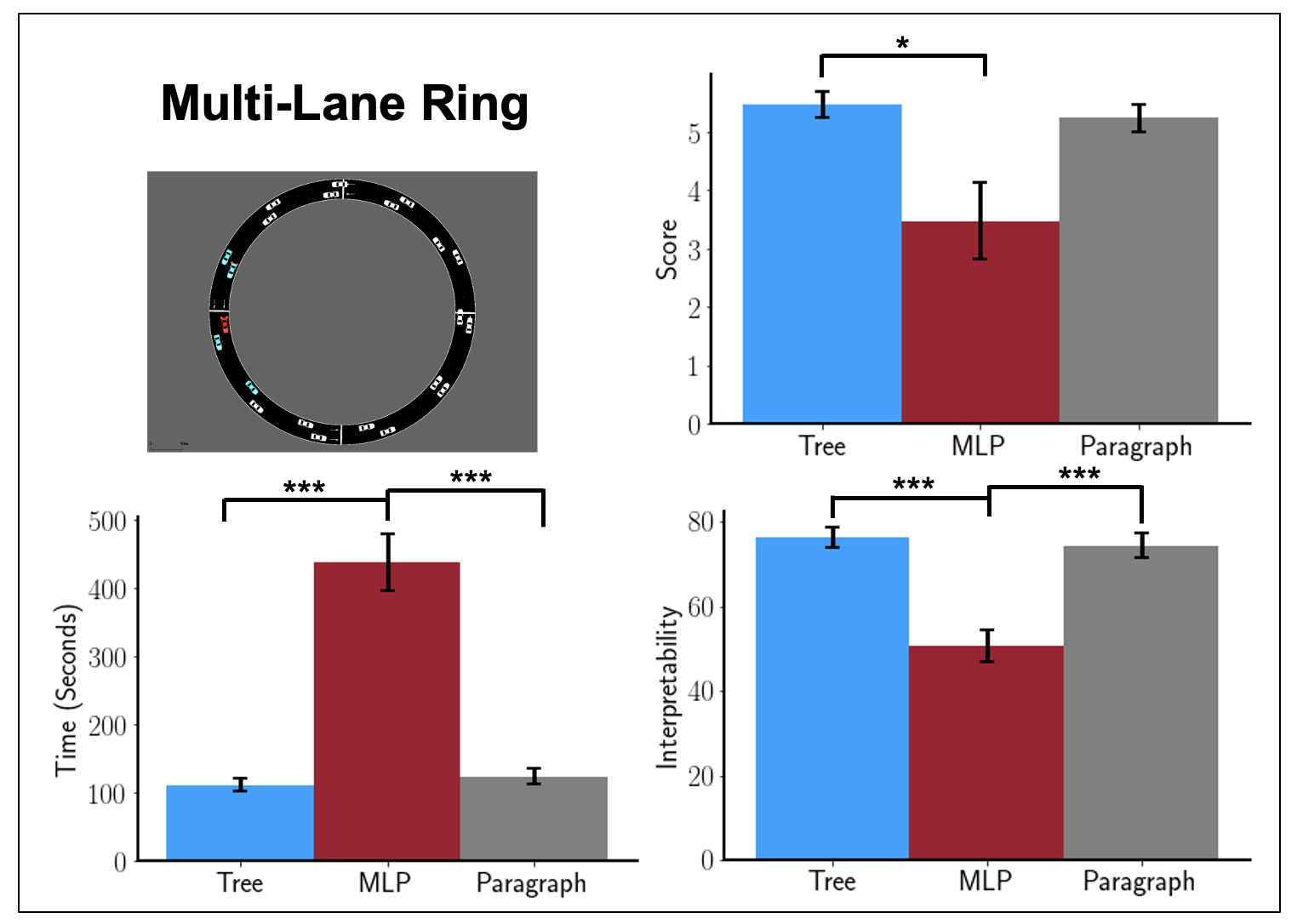}
        \includegraphics[width=0.795\textwidth]{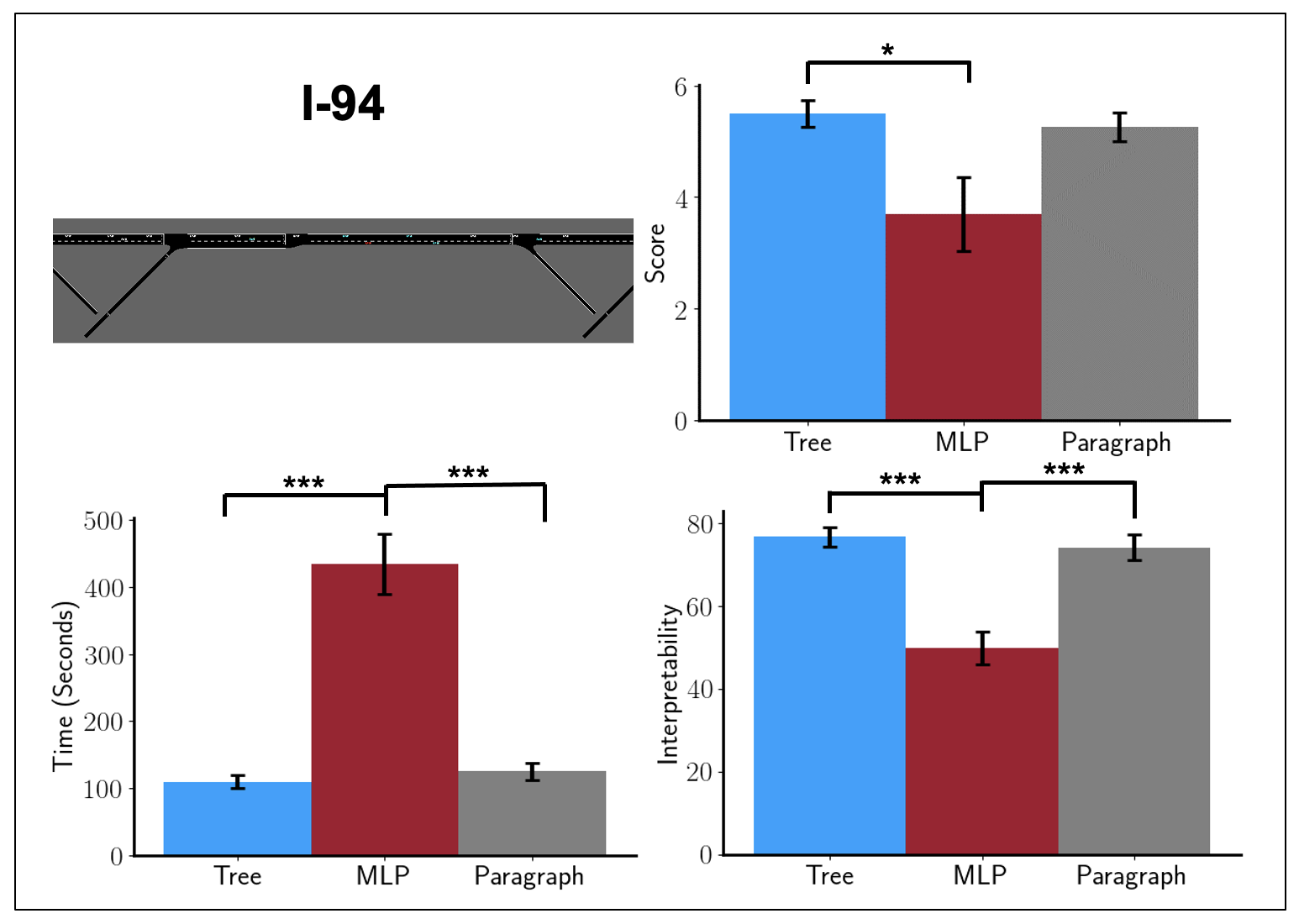}
    \caption{This figure shows the comparison of results between the Multi-Lane Ring domain and the I-94 domain across the three models in the user study. $^{*}$ denotes a significant difference of $p<.05$. $^{***}$ denotes a significant difference of $p<.001$. }
    \label{fig:h6}
\end{figure*}

\textit{H1-H3:}
We summarize the results for H1-H3 in Figure~\ref{fig:h1-3}. As we test the user's simulation of outputs on each model three times and the action output is two-dimensional (acceleration and lane-changing), each user is validated across six action outputs for each model. We denote the number of accurate answers out of the six as the ``score'' in Figure~\ref{fig:h1-3} left. We observe that participants are able to simulate the outputs of the \textit{ICCT} more accurately than an \textit{MLP} ($p<.05$), supporting H1. Furthermore, the time spent on the \textit{ICCT} and \textit{Paragraph} to evaluate the output is significantly less than the time spent on MLP ($p<.001$), shown in Figure~\ref{fig:h1-3} (middle). \textit{Tree} and \textit{Paragraph} are also rated by users to have significantly higher interpretability ($p<.001$) (Figure~\ref{fig:h1-3} right). As such, the results from the user study support H1-H3, showing \textit{Tree} and \textit{Paragraph} are easier to simulate, quicker to validate, and more interpretable than neural networks. We find there is no significant difference across all three metrics between the tree form and the paragraph form of the ICCT. 

\textit{H4:}
We show the results to test for H4 in Figure~\ref{fig:h4}. We hypothesize that with practice, the ability of the users to evaluate the models may increase. However, Figure~\ref{fig:h4} shows that the accuracy on the first trial is the highest, and the time spent on the first trial is the lowest. Instead of the learning effect, the finding may be explained by a fatiguing effect, which causes the participants to have lower performance in later trials. Another hypothesis is that the first trial questions are relatively easy as they correspond to the early stages of the execution, where the environment has fewer cars. The changes in accuracy score and time across three trials do not have a significant interaction effect with models, and therefore H4 is not supported. However, across repeated iterations, we observe that the score and the prediction time of the ICCT tree form and paragraph form are close while being much easier to simulate than an MLP. 

\textit{H5:}
We illustrate the result for H5 in Figure~\ref{fig:h5}. We observe that generally, for all three models, the condition with context results in slightly lower accuracy scores and interpretability scores. For \textit{Tree} and \textit{Paragraph}, the time spent with context is slightly higher than the condition without context. However, the time spent on MLP without context is higher than with context. One possible reason could be that tree depictions and paragraph descriptions are interpretable and quick to evaluate, and therefore context does not provide more benefit but introduces some workload overhead. For MLP, the context helps the user to understand the situation and therefore makes the evaluation faster. Overall, H5 is not supported as context does not provide a significant boost to subjective interpretability. 

\textit{H6:}
The comparison of results between the two domains, Multi-Lane Ring and I-94, can be viewed in Figure~\ref{fig:h6}. We observe that the results for H1-H3 are similar for both domains and therefore, H6 is supported by displaying the advantage of different representations of the ICCT's (both the \textit{Tree} and \textit{Paragraph} form) interpretability over an \textit{MLP} is regardless of the two domains ($p<.05$ on accuracy score and $p<.001$ on both time and subjective interpretability).

\section{Conclusion}
In this work, we present a novel tree-based model for continuous control, applicable to a wide variety of domains including robotic manipulation, autonomous driving, etc. Our Interpretable Continuous Control Trees (ICCTs) have competitive performance to that of deep neural networks across several continuous control domains, including six difficult autonomous driving scenarios and two driving domains grounded in realistic lane geometries, while maintaining high interpretability. The maintenance of high performance within an interpretable and verifiable reinforcement learning architecture provides a paradigm that would be beneficial for the safe real-world deployment of autonomous systems. 

\color{black}
\section{Limitations and Future Work:} 

Our framework has several limitations. Continuous control outputs (e.g., predicting a steering angle) may not be interpretable to end-users and may require post-processing to enhance a user's understanding. Also, the relationship between controller sparsity, tree depth, and interpretability is not clear, making controller sparsity and tree depth difficult-to-define hyperparameters. We also note that in more challenging environments, larger ICCTs may be required to increase their representative power. In these instances, although the size of Interpretable Continuous Control Trees (ICCTs) may pose challenges for end-users in terms of interpretation, it is important to note that they can still be verified by experts and interpreted within specific tree sub-spaces.In future work, we will extend ICCTs to incorporate constraints from end-users, provide safety guarantees on our ICCTs, and reason about how the complexity of an ICCT may change as we move to higher-abstraction state spaces.

\acks{This work was supported by a gift award from the Ford Motor Company, National Science Foundation (NSF) 1757401 (SURE Robotics), NSF CNS 2219755, and a research grant from MIT Lincoln Laboratory. 

DISTRIBUTION STATEMENT A. Approved for public release. Distribution is unlimited. This material is based upon work supported by the Under Secretary of Defense for Research and Engineering under Air Force Contract No. FA8702-15-D-0001. Any opinions, findings, conclusions or recommendations expressed in this material are those of the author(s) and do not necessarily reflect the views of the Under Secretary of Defense for Research and Engineering. \\
Delivered to the U.S. Government with Unlimited Rights, as defined in DFARS Part 252.227-7013 or 7014 (Feb 2014). Notwithstanding any copyright notice, U.S. Government rights in this work are defined by DFARS 252.227-7013 or DFARS 252.227-7014 as detailed above. Use of this work other than as specifically authorized by the U.S. Government may violate any copyrights that exist in this work.}

\color{black}





\bibliography{sample}

\end{document}